\definecolor{Bleu}{RGB}{30,144,255}
\newcommand*{\colorboxed}{}
\def\colorboxed#1#{%
  \colorboxedAux{#1}%
}
\newcommand*{\colorboxedAux}[3]{%
  % #1: optional argument for color model
  % #2: color specification
  % #3: formula
  \begingroup
    \colorlet{cb@saved}{.}%
    \color#1{#2}%
    \boxed{%
      \color{cb@saved}%
      #3%
    }%
  \endgroup
}
\newcommand{\R}{\mathbb{R}}
\newcommand{\N}{\mathbb{N}}
\newcommand{\bE}{\mathbb{E}}
\newcommand{\bP}{\mathbb{P}}
\newcommand{\bR}{\mathbb{R}}
\newcommand{\cA}{\mathcal{A}}
\newcommand{\cE}{\mathcal{E}}
\newcommand{\cF}{\mathcal{F}}
\newcommand{\cI}{\mathcal{I}}
\newcommand{\cS}{\mathcal{S}}
\newcommand{\cU}{\mathcal{U}}
\newcommand{\ARMS}{[K]}
\newcommand{\TOP}{\cS_{m}^{*,\varepsilon}}
\newcommand{\WORST}{(\TOP)^c}
\newcommand{\TOPM}{\cS_m^\star}
\newcommand{\WORSTM}{(\TOPM)^c}
\newcommand{\EMPTOP}[1]{\mbox{$\hat{S}^{#1}_m$}}
\newcommand{\EMPTOPTAU}{\mbox{$\hat{S}^{\tau_{\delta}}_m$}}
\newcommand{\EMPWORSTTAU}[1]{(\EMPTOPTAU)^c}
\newcommand{\GAP}[1]{\Delta_{#1}}
\newcommand{\GAPPAIRED}[2]{\Delta_{#1,#2}}
\newcommand{\EMPGAP}[3]{\hat{\Delta}_{#1, #2}(#3)}
\newcommand{\HA}[1]{\text{H}^{\varepsilon}(#1, \mu)}
\newcommand{\HATTHETA}[1]{\hat{\theta}^{\lambda}_{#1}}
\newcommand{\HATSIGMA}[1]{\hat{\Sigma}^{\lambda}_{#1}}
\newcommand{\HATB}[1]{\hat{V}^{\lambda}_{#1}}
\newcommand{\INVHATB}[1]{(\HATB{#1})^{-1}}
\newcommand{\EMPMU}[2]{\hat{\mu}_{#1}(#2)}
\newcommand{\INTERSEC}[1]{\underset{#1}{\bigcap}}
\newcommand{\EGIFA}{\cE^{GIFA}_m}
\newcommand{\tauUGapE}{\tau^{{UGapE}}}
\newcommand{\tauLUCB}{\tau^{{LUCB}}}
\DeclareMathOperator*{\argmax}{arg\,max}
\DeclareMathOperator*{\argmin}{arg\,min}
\newcommand{\mathscr}[1]{\mathcal{#1}}
\newcommand{\maxm}[1]{%
  \ifthenelse{\isempty{#1}}%
    {\overset{m}{\max}}% if #1 is empty
    {\underset{#1}{\overset{m}{\max}}\, }% if #1 is not empty
}
\newcommand{\minm}[1]{%
  \ifthenelse{\isempty{#1}}%
    {\overset{m}{\min}}% if #1 is empty
    {\underset{#1}{\overset{m}{\min}}\, }% if #1 is not empty
}
\newcommand{\argmaxm}[1]{%
  \ifthenelse{\isempty{#1}}%
    {\overset{m}{\argmax}}% if #1 is empty
    {\underset{#1}{\overset{m}{\argmax}}\, }% if #1 is not empty
}
\newcommand{\argminm}[1]{%
  \ifthenelse{\isempty{#1}}%
    {\overset{m}{\argmin}}% if #1 is empty
    {\underset{#1}{\overset{m}{\argmin}}\, }% if #1 is not empty
}
\newcommand{\maxmset}[1]{%
  \ifthenelse{\isempty{#1}}%
    {\overset{[m]}{\max}}% if #1 is empty
    {\underset{#1}{\overset{[m]}{\max}}\, }% if #1 is not empty
}
\newcommand{\minmset}[1]{%
  \ifthenelse{\isempty{#1}}%
    {\overset{[m]}{\min}}% if #1 is empty
    {\underset{#1}{\overset{[m]}{\min}}\, }% if #1 is not empty
}
\newcommand{\argmaxmset}[1]{%
  \ifthenelse{\isempty{#1}}%
    {\overset{[m]}{\argmax}}% if #1 is empty
    {\underset{#1}{\overset{[m]}{\argmax}}\, }% if #1 is not empty
}
\newcommand{\argminmset}[1]{%
  \ifthenelse{\isempty{#1}}%
    {\overset{[m]}{\argmin}}% if #1 is empty
    {\underset{#1}{\overset{[m]}{\argmin}}\, }% if #1 is not empty
}
\newcommand{\NA}[2]{N_{#1}(#2)}
\DeclareMathOperator*{\CUP}{\cup}
\DeclareMathOperator*{\CAP}{\cap}
\newcommand{\FREQUENTISTBETA}[1]{\sqrt{2\ln\left(\frac{1}{\delta}\right)+N\ln\left(1+\frac{(#1+1) L^2}{\lambda^2 N}\right)}+\frac{\sqrt{\lambda}}{\sigma}S}
\newtheorem{lemma}{Lemma}
\newtheorem{theorem}{Theorem}
\newtheorem{definition}{Definition}
\newtheorem{remark}{Remark}
\begin{document}
\runningtitle{Top-$m$ identification for linear bandits}
\runningauthor{R\'{e}da, Kaufmann, Delahaye-Duriez}

% 9 PAGES

\twocolumn[\aistatstitle{Top-$m$ identification for linear bandits}
\aistatsauthor{ Cl\'{e}mence R\'{e}da$^{1,*}$ \And {E}milie Kaufmann$^2$ \And Andr\'{e}e Delahaye-Duriez$^{1,3,4}$ \\ \textcolor{white}{bla}}
\aistatsaddress{
$^1$ Universit\'{e} de Paris, Inserm UMR 1141 NeuroDiderot, F-75019, Paris, France, $^{*} \texttt{clemence.reda@inria.fr}$ \\
$^2$ Universit\'{e} Lille, CNRS, Inria, Centrale Lille, UMR 9189 CRIStAL, F-59000 Lille, France \\
$^3$ Universit\'{e} Sorbonne Paris Nord, UFR SMBH, F-93000, Bobigny, France \& $^4$ Assistance Publique des H\^{o}pitaux\\
de Paris, H\^{o}pital Jean Verdier, Service d'Histologie-Embryologie-Cytog\'{e}n\'{e}tique, F-93140, Bondy, France
}]
\vspace{-1cm}

\begin{abstract}
    Motivated by an application to drug repurposing, we propose the first algorithms to tackle the identification of the $m \geq 1$ arms with largest means in a linear bandit model, in the fixed-confidence setting. These algorithms belong to the generic family of Gap-Index Focused Algorithms (GIFA) that we introduce for Top-$m$ identification in linear bandits. We propose a unified analysis of these algorithms, which shows how the use of features might decrease the sample complexity. We further validate these algorithms empirically on simulated data and on a simple drug repurposing task.
\end{abstract}

\section{INTRODUCTION}

The multi-armed bandit setting, in which an agent sequentially gathers samples from $K$ unknown probability distributions called arms, is a powerful framework for sequential resource allocation tasks. While a large part of the literature focuses on the reinforcement learning problem in which the samples are viewed as reward that the agent seeks to maximize \citep{bubeckCB12survey}, \emph{pure-exploration} objectives have also received a lot of attention \citep{bubeck2009pure,degenne19multiple}. In this paper, we focus on \emph{Top-$m$ identification} in which the goal is to identify the $m < K$ arms with the largest expected rewards. While several Top-$m$ identification algorithms have been given, no algorithm has been specifically designed to tackle the  challenging \emph{linear bandit} setting~\citep{auer2002using}. This paper aims at filling this gap. In a linear bandit model, the mean $\mu_a$ of each arm $a$ is assumed to depend linearly on a known feature vector $x_a \in \bR^N$ associated to the arm: $\mu_a  = \theta^\top x_a$ for some vector $\theta \in \bR^N$. In contrast, the so-called classical bandit model does not make any assumption on the means $(\mu_a)_{a \in [K]}$\footnote{For $n\in \N^*$ we use the shorthand $[n] = \{1,\dots,n\}$.}. A Top-$m$ identification algorithm outputs a subset of size $m$ as a guess for the $m$ arms with largest means. Several objectives exist: in the fixed-budget setting, the goal is to minimize the probability that the guess is wrong after a pre-specified total number of samples from the arms \citep{bubeck2013multiple}. In this paper, we focus on the \emph{fixed-confidence setting}, in which the error probability should be guaranteed to be smaller than a given risk parameter $\delta \in (0,1)$ while minimizing the \emph{sample complexity}, that is, the total number of samples needed to output the guess. This choice is motivated by a real-life application to drug repurposing, in which we would like to control the failure rate in our predictions.

Drug repurposing is a field of research aimed at discovering new indications for drugs which are already approved for marketing, and may contribute to solve the problem of ever increasing research budget need for drug discovery~\citep{hwang2016failure}. For a given disease, we are interested in identifying a subset of drugs that may have a therapeutic interest. Providing a group of $5$ or $10$ drugs, rather than a single one, can ease the decision of further investigation, as many leads are provided. We believe sequential methods could be of interest, when considering a drug repurposing method called ``signature reversion''~\citep{musa2018review}. In this context, drugs recommended for repurposing are the ones minimizing the difference in gene activity between treated patients and healthy individuals. A possible way to measure this difference is to build a simulator that evaluates the genewise impact of a given drug on patients. However, this simulator may be stochastic and computationally expensive hence the need for sequential queries, that can be modeled as sampling of arms. As the arms (drugs) can be characterized by a real-valued feature vector which represents genewise activity change due to treatment, we resort to linear bandits to tackle this Top-$m$ identification problem.

\vspace{-0.3cm}

\paragraph{Related work} Two types of fixed-confidence algorithms have been proposed for Top-$m$ identification in a classical bandit: those based on adaptive sampling such as LUCB~\citep{kalyanakrishnan2012pac} or UGapE~\citep{gabillon2012best} or those based on uniform sampling and eliminations~\citep{kaufmann2013information,chen2017nearly}. For linear bandits, to the best of our knowledge, the only efficient algorithms have been proposed for the \emph{best arm identification} (BAI) problem, which corresponds to $m=1$. This setting, first investigated by \citet{soare14BAIlin}, recently received a lot of attention: an efficient adaptive sampling algorithm called LinGapE was proposed by \citet{xu2017fully} and subsequent work such as \citet{fiez2019sequential} sought to achieve the minimal sample complexity. In particular, the LinGame algorithm of \citet{degenne2020gamification} is proved to exactly achieve the problem-dependent sample complexity lower bound for linear BAI in a regime in which $\delta$ goes to zero. 

We note that, in principle, LinGame can be used for any pure exploration problem in a linear bandit, which includes Top-$m$ identification for $m>1$. However, this algorithm uses a game theoretic formalism which needs the computation of a best response for Nature in response to the player's selection; a computable expression of this strategy is not available to our knowledge for Top-$m$ ($m > 1$). Besides, computing the information-theoretic lower bound for Top-$m$ identification is also computationally hard. These remarks led us to investigate efficient adaptive sampling algorithms for general Top-$m$ identification $(m \geq 1)$ in linear bandit, which are still missing in the literature, instead of trying to propose asymptotically optimal algorithms, as done in linear BAI.

\vspace{-0.3cm}

\paragraph{Contributions} First, by carefully looking at known adaptive sampling bandits for classical Top-$m$, we propose a generic algorithm structure based on \emph{Gap Indices}, called GIFA, which encompasses existing adaptive algorithms for classical Top-$m$ identification and linear BAI. This structure allows a higher order and modular understanding of the learning process, and correctness properties can readily be inferred from a partially specified bandit algorithm. It allows us to define two interesting new algorithms, called $m$-LinGapE and LinGIFA. In Section~\ref{sec:gifa_theoretical}, we present a unified sample complexity analysis of %these 
a subclass of GIFA algorithms which comprises existing methods, which shows that the use of features can help decreasing the sample complexity in some cases. Finally, we show in Section~\ref{sec:experiments} that  $m$-LinGapE and LinGIFA perform better than their counterparts for classical bandits, both on artificially generated linear bandit instances and on a simple instance of our drug repurposing application, that is described in detail in Appendix~\ref{sec:dr_instance}.

\vspace{-0.4cm}

\paragraph{Notation} We let $\maxmset{}$, $\maxm{}$ (resp. $\minmset{}$, $\minm{}$) be the operator returning the $m$, $m^{th}$ greatest (resp. smallest) value(s), $\|x\|_{M} = \sqrt{x^\top M x}$ where $M$ is positive definite, and $\|x\| = \sqrt{x^\top x}$.

\section{SETTING}\label{sec:setting}

In this section, we introduce the \emph{probably approximately correct} (PAC) fixed-confidence linear Top-$m$ problem. From now on, we identify each arm by an integer in $\ARMS$ such that $\mu_1 \geq \mu_2 \geq \dots \geq \mu_m > \mu_{m+1} \geq \mu_{m+2} \geq \dots \geq \mu_K$. This ordering is of course unknown to the learner. In the PAC formalism, we are willing to relax the optimality constraint on arms using slack variable $\varepsilon > 0$ and we denote by $\TOP \triangleq \{a \in [K] : \mu_a \geq \mu_m - \varepsilon\}$ the set of best arms up to $\varepsilon$. %When
Since $\mu_m > \mu_{m+1}$, we use the shorthand $\TOPM \triangleq \cS_m^{*,0} = [m]$ for the set of $m$ arms with largest means. The set of $K-m$ arms worst arms is its complementary $\WORSTM = [K] \backslash \TOPM$. A linear bandit model is parameterized by an unknown vector $\theta \in \R^N$, %that is assumed to satisfy 
such that there are constants $L, S > 0$ which satisfy $\|\theta \| \leq S \in \bR^{*+}$ and such that the mean of arm $a$ is $\mu_a = \theta^\top x_a$, where the feature vector $x_a$ satisfies $\|x_a\| \leq L \in \bR^{*+}$. In each round $t \geq 1$, a learner selects an arm $a_t \in [K]$ and observes a sample  
\vspace{-0.3cm}
\[r_t = \theta^\top x_{a_t} + \eta_t,\]
where $\eta_t$ is a zero-mean sub-Gaussian noise with variance $\sigma^2$ --that is, $\bE[e^{\lambda \eta_t}] \leq \exp\left((\lambda^2\sigma^2)/{2}\right)$ for all $\lambda \in \R$ -- which is independent from past observations.

For $m\in [K]$, an algorithm for Top-$m$ identification consists of a \emph{sampling rule} $(a_t)_{t\in\N^*}$, where $a_t$ is $\cF_{t-1}$-measurable, and $\cF_{t} = \sigma(a_1,r_1,\dots,a_t,r_t)$ is the $\sigma$-algebra generated by the observations up to round $t$~; a \emph{stopping rule} $\tau$, which is a stopping time with respect to $\cF_t$ that indicates when the learning process is over; and a \emph{recommendation rule} $\EMPTOP{\tau}$, which is a $\cF_{\tau}$-measurable set of size $m$ that provides a guess for the $m$ arms with the largest means. Then, given a slack variable $\varepsilon \geq 0$ and the failure rate $\delta \in (0,1)$, an $(\varepsilon,m,\delta)$-PAC algorithm is such that 
\vspace{-0.3cm}
\[\bP\left(\EMPTOP{\tau} \subseteq  \cS_m^{*,\varepsilon}\right)  \geq 1 - \delta.\]
In the fixed confidence setting, the goal is to design a $(\varepsilon,m,\delta)$-PAC algorithm with a small \emph{sample complexity $\tau$}. In order to do so, the learner needs to estimate the unknown parameter $\theta \in \R^N$, which can be done with a (regularized) least-squares estimator. For any arm $a \in \ARMS$, we let $\NA{a}{t} \triangleq \sum_{s =1}^t \mathds{1}(a_s = a)$ be the number of times that arm $a$ is sampled up to time $t$, and define the $\lambda$-regularized design matrix and least-squares estimate: 
\vspace{-0.5cm}
\begin{eqnarray*}
\HATB{t} \!\!&\triangleq & \!\!\lambda I_N + \sum_{a=1}^{K} \NA{a}{t}x_ax_a^\top\\
\text{ and } \hat{\theta}_t^{\lambda}  &\triangleq& \left(\HATB{t}\right)^{-1}\left(\sum_{s=1}^{t} r_s x_{a_s}\right)\;. 
\end{eqnarray*}
We let $\HATSIGMA{t} \triangleq \sigma^{2}\left(\HATB{t}\right)^{-1}$, which can be interpreted as the posterior covariance in a Bayesian linear regression model in which the covariance of the prior is $(\sigma^2/\lambda) I_N$. 

The algorithms that we present in the next section crucially rely on estimating the gaps $\Delta_{i,j} \triangleq \mu_i - \mu_j$ between pairs of arms $(i,j)$, and building \emph{upper confidence bounds} (UCBs) on these quantities. For this purpose, we introduce the empirical mean of each arm $a$, $\EMPMU{a}{t} \triangleq (\HATTHETA{t})^\top x_a$ and define the empirical gap $\EMPGAP{i}{j}{t} \triangleq \EMPMU{i}{t}-\EMPMU{j}{t}$. A first option to build UCBs on $\Delta_{i,j}$ consists in building individual confidence intervals on the mean of each arm $a$, that are of the form $L_a(t) = \EMPMU{a}{t} - W_t(a)$ for the lower-confidence bound, and $U_a(t) = \EMPMU{a}{t} + W_t(a)$ for the upper confidence bound, where $W_t(a) \triangleq C_{t,\delta}\|x_a\|_{\HATSIGMA{t}}$ for some threshold function $C_{t,\delta}$ to be specified later. Clearly,
\vspace{-0.3cm}
\[B_{i,j}^{\text{ind}}(t) = U_i(t) - L_j(t) = \hat{\Delta}_{i,j}(t) + W_t(i)+W_t(j)\]
is an upper bound on $\Delta_{i,j}$ if $L_{j}(t) \leq \mu_j$ and $\mu_i \leq U_i(t)$. %$U_i(t) \geq \mu_i$ and $L_{j}(t) \leq \mu_j$.
Yet, using the linear model, one can also directly build an UCB on the difference via 
\vspace{-0.3cm}
\[B_{i,j}^{\text{pair}}(t) = \hat{\Delta}_{i,j}(t) + C_{t, \delta}\|x_i - x_j\|_{\HATSIGMA{t}}.\]
Both constructions lead to symmetrical bounds\footnote{Note also that there exist algorithms with non symmetrical bounds, such the KL version of LUCB~\citep{kaufmann2013information}.}
of the form $B_{i,j}(t) = \hat{\Delta}_{i,j}(t)+W_t(i,j)$, where $W_t(i,j) = C_{t,\delta}\|x_i - x_j\|_{\HATSIGMA{t}}$ for paired UCBs and  $W_t(i,j) = C_{t, \delta}\left(\|x_i\|_{\HATSIGMA{t}} + \|x_j\|_{\HATSIGMA{t}}\right)$ for individual UCBs. In both cases, $W_t(i,j)=W_t(j,i) \geq 0$. 
The fact that these quantities are indeed upper confidence bounds (for some pairs $(i,j) \in \ARMS^2$) will be justified in Section~\ref{sec:algorithms}.

\begin{remark}
Observe that the linear bandit setting subsumes the classical bandit model, choosing arm features $(x_a)_{a \in \ARMS}$ to be vectors of the canonical basis of $\bR^K$ ($N=K$) and $\theta = \mu = [\mu_1, \mu_2, \dots, \mu_K]^\top$. Then, $\lambda = 0$ yields standard UCBs in this model: $\hat\mu_a(t)$ reduces to the empirical average of the rewards gathered from arm $a$ and $\|x_a\|_{\HATSIGMA{t}} = {\sigma}/{\sqrt{\NA{a}{t}}}$. 
\end{remark}

\section{GAP-INDEX FOCUSED ALGORITHMS (GIFA)}\label{sec:algorithms}

\subsection{Generic GIFA Algorithms}

\begin{table*}[t]
	\caption{Adaptive samplings for Top-$m$ (our proposals are in bold type~; except for LUCB and UGapE, all algorithms use indices $B=B^{\text{pair}}$ instead of individual indices $B^{\text{ind}}$).
    }\label{tab:algorithm_comparison}

    \vspace{0.1cm}

    \begin{tabular}{l|l|l|l|l}
        \textbf{Algorithm} & \texttt{compute\_Jt} & \texttt{compute\_bt}  & \texttt{selection\_rule} & \texttt{stopping\_rule}\\
        \hline
	    LUCB & $\colorboxed{white}{\argmax^{[m]}_{j \in \ARMS} \EMPMU{j}{t}}$ & $\colorboxed{white}{\argmax_{j \in J(t)} \max_{i \not\in J(t)} B_{i,j}(t)}$  & largest variance & $ B_{c_t,b_t}(t) \leq \varepsilon$\\
        UGapE & $\colorboxed{white}{\argmin^{[m]}_{j \in \ARMS} \max_{i \neq j}^{m} B_{i,j}(t)}$  & $\colorboxed{white}{\argmax_{j \in J(t)} \max_{i \not\in J(t)} B_{i,j}(t)}$ & largest variance & $\colorboxed{white}{\max_{j \in J(t)} \max_{i \neq j}^{m} B_{i,j}(t)}\leq \varepsilon$\\
        \hline
        LinGapE & $\colorboxed{white}{\argmax_{j \in \ARMS} \EMPMU{j}{t}}$ & $\colorboxed{white}{J(t)}$ ($|J(t)| = m = 1$) & greedy, optimized & $ B_{c_t,b_t}(t)\leq \varepsilon$ \\
        \hline
        \textbf{$\bm{m}$-LinGapE} & $\colorboxed{white}{\argmax^{[m]}_{j \in \ARMS} \EMPMU{j}{t}}$ & $\colorboxed{white}{\argmax_{j \in J(t)} \max_{i \not\in J(t)} B_{i,j}(t)}$ & largest variance,  & $ B_{c_t,b_t}(t)\leq \varepsilon$\\
	    \textbf{($m\geq1$)} & & & greedy, optimized \\
        \textbf{LinGIFA}  & $\colorboxed{white}{\argmin^{[m]}_{j \in \ARMS} \max_{i \neq j}^{m} B_{i,j}(t)}$ & $\colorboxed{white}{\argmax_{j \in J(t)} \max_{i \neq j}^{m} B_{i,j}(t)}$  & largest variance, & $\colorboxed{white}{\max_{j \in J(t)} \max_{i \neq j}^{m} B_{i,j}(t)}\leq \varepsilon$\\
        & & & greedy  &\\
    \end{tabular}
\end{table*}
Looking at the literature for Top-$m$ identification and BAI in (linear) bandits, existing adaptive sampling designs~\citep{kalyanakrishnan2012pac,gabillon2012best,xu2017fully} have many ingredients in common. We formalize in this section a generic structure that encompasses these algorithms, and under which we propose two new algorithms. We introduce the notion of  ``Gap-Index Focused'' Algorithms (GIFA), which associates to each \emph{pair of arm} $(i,j)$ an index $B_{i,j}(t)$ at time $t$ (called ``gap index''). 

The idea in GIFA is to estimate in each round $t$ a set of candidate $m$ best arms, denoted by $J(t)$, and to select the two most ambiguous arms: $b_t \in J(t)$, which can be viewed as a guess for the $m$-best arm, and a \emph{challenger} $c_t \not\in J(t)$. $c_t$ is defined as a potentially misassessed $m$-best arm, with largest possible gap to $b_t$: $c_t = \argmax_{c \in \ARMS} B_{c,b_t}(t)$. The idea of using two ambiguous arms goes back to LUCB~\citep{kalyanakrishnan2012pac} for Top-$m$ identification in classical bandits. Then the final arm $a_t$ selected by the algorithm should help discriminate between $b_t$ and $c_t$. A naive idea is to either draw $b_t$ or $c_t$, but alternative selection rules will be discussed later. At the end of the learning phase, at stopping time $\tau$, the final set $J(\tau)$ is recommended.

\begin{algorithm}[!h]
\begin{algorithmic}
\STATE \textbf{Input}: $K$ arms of means $\mu_1, \dots, \mu_K$; $\varepsilon \geq 0;$ 
\STATE $m \leq K; \delta \in (0,1)$.
\STATE \textbf{Output}: \EMPTOPTAU~estimated $m$ $\varepsilon$-optimal arms at final time $\tau_{\delta}$\\
%\STATE \texttt{done} $\leftarrow \bot$
\STATE $t \leftarrow 1$
\STATE \texttt{initialization}()
\WHILE{$\neg$ \texttt{stopping\_rule}($(B_{i,j}(t))_{i,j \in \ARMS}; \varepsilon, t$)}
\STATE \textit{// $J(t)$: estimated $m$ best arms at $t$}
\STATE $J(t) \leftarrow $ \texttt{compute\_Jt}($(B_{i,j}(t))_{i,j \in \ARMS}, (\EMPMU{i}{t})_{i \in \ARMS}$)
\STATE \textit{// $b_t$: estimated $m$-best arm at $t$}
\STATE $b_t \leftarrow $ \texttt{compute\_bt}($J(t), (B_{i,j}(t))_{i,j \in \ARMS}$)
\STATE \textit{// $c_t$: challenger to $b_t$ at $t$}
\STATE $c_t \leftarrow \argmax_{a \not\in J(t)} B_{a, b_t}(t)$ 
\STATE \textit{// selecting and pulling arms}
\STATE $a_t \gets$ \texttt{selection\_rule}($b_t, c_t ; \HATB{t-1}$)
\STATE $r_{t} \gets$ pulling($a_t$)
\STATE Update design matrix $\HATB{t}$, means $(\EMPMU{i}{t})_{i \in \ARMS}$
\STATE Update gap indices $(B_{i,j}(t+1))_{i,j \in \ARMS}$
\STATE $t \leftarrow t+1$
\ENDWHILE
\STATE \EMPTOPTAU $\leftarrow J(\tau_{\delta})$
\STATE \textbf{return} \EMPTOPTAU 
\end{algorithmic}
\caption{GIFA for ($\varepsilon,\delta$)-PAC Top-$m$}
\label{alg:adaptive_sampling_structure}
\end{algorithm}

The GIFA structure is presented in Algorithm~\ref{alg:adaptive_sampling_structure}. \texttt{initialization} is an optional phase where all arms are sampled once. We assume that ties are randomly broken. Degrees of freedom in designing the bandit algorithm lie in the choice of the rules \texttt{compute\_Jt}, \texttt{compute\_bt}, \texttt{selection\_rule} and \texttt{stopping\_rule}, some of which may also rely on the gap indices. For example, for the stopping rule, we restrict our attention to two stopping rules already proposed for the LUCB and UGapE algorithms, respectively: 
\begin{eqnarray*}
 \tauLUCB & \triangleq & \inf\left\{ t \in \N^* : B_{c_t, b_t}(t) \leq \varepsilon\right\}\;, \\
 \tauUGapE & \triangleq & \inf\left\{ t \in \N^* : \max_{j \in J(t)} \max_{i \neq j}^{m} B_{i,j}(t) \leq \varepsilon\right\}. 
\end{eqnarray*}

\subsection{Existing GIFA algorithms} 

We summarize in Table~\ref{tab:algorithm_comparison} existing and new algorithms that fit the GIFA framework. 

\paragraph{LUCB and UGapE} The only algorithms for Top-m identification in the fixed-confidence setting using adaptive sampling --namely, LUCB~\citep{kalyanakrishnan2012pac} and UGapE~\citep{gabillon2012best}-- were proposed for classical bandits. Both can be cast in the GIFA framework with the individual gap indices $B^{\text{ind}}_{i,j}(t) = U_i(t) - L_j(t)$ presented in Section~\ref{sec:setting}. Indeed, with this choice of indices, it can be shown that the ambiguous arms $b_t = \argmin_{j \in J(t)} L_j(t)$ and $c_t =\argmax_{i \not\in J(t)} U_i(t)$ used by both LUCB and UGapE can be rewritten as $\argmax_{j \in J(t)} \min_{i \not\in J(t)} B_{i,j}(t)$ and $\argmax_{i \not\in J(t)} B_{i,b_t}(t)$, respectively. This rewriting is crucial to propose extensions for the linear case. However, LUCB and UGapE differ by the set $J(t)$ they use and their stopping rule, as can be seen in Table~\ref{tab:algorithm_comparison}. LUCB uses the $m$ arms with largest empirical means for the set $J(t)$ while UGapE uses the $m$ arms with smallest values of $\maxm{i \neq j} \ B_{i,j}(t)$. This choice is motivated by \citet{gabillon2012best} by the fact that, with high probability, the quantity $\maxm{i \neq j} B_{i,j}(t)$ is an upper bound on $\mu_m - \mu_j$ . This is also true for more general gap indices, see Lemma~\ref{lemma:upper_bound_gap} in Appendix~\ref{app:technical}. 

This observation also justifies the UGapE stopping rule: when $\max_{j\in J(t)}\maxm{i \neq j} \ B_{i,j}(t) \  \leq \varepsilon$, with high probability $\mu_m - \mu_j \leq \varepsilon$ for all $j\in J(t)$, hence this set is likely to be included in $\cS_m^{*,\varepsilon}$. LUCB stops when $\max_{j\in J(t)} \max_{i \notin J(t)} B_{i,j}(t) \leq \varepsilon$. 
Interestingly, it follows from Lemma~\ref{lem:counting} below that, for fixed gap indices $(B_{i,j}(t))_{i,j \in \ARMS^2, t > 0}$, UGapE will always stop before LUCB. Still, we chose to investigate the two stopping rules, as the sample complexity analysis of UGapE %(and that of LinGIFA in this paper) 
uses the upper bound $\tauLUCB \geq \tauUGapE$. We provide correctness guarantees for both stopping rules in Section~\ref{subsec:correctness} and investigate their practical impact in Section~\ref{sec:experiments}.

\begin{lemma}\label{lem:counting}For all $t > 0$, for any subset $J$ of size $m$, for all $j\in J$, $\maxm{i\neq j} B_{i,j}(t) \leq \max_{i \notin J} B_{i,j}(t)$. \\
\textnormal{(proof in Appendix~\ref{app:technical})}
\end{lemma}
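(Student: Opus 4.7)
The plan is a short pigeonhole argument based on counting. Fix $t>0$, a subset $J\subseteq [K]$ of size $m$, and an element $j\in J$. The index set over which the left-hand side takes an $m$-th maximum, namely $\{i\in[K]:i\neq j\}$, has $K-1$ elements and partitions as
\[
\{i\in[K]:i\neq j\}\;=\;(J\setminus\{j\})\,\sqcup\,([K]\setminus J),
\]
with $|J\setminus\{j\}|=m-1$ and $|[K]\setminus J|=K-m$.

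The idea I would then use is: let $I^\star\subseteq\{i\in[K]:i\neq j\}$ be any choice of $m$ indices realising the top-$m$ values of $i\mapsto B_{i,j}(t)$ (ties broken arbitrarily), so that $\maxm{i\neq j}B_{i,j}(t)=\min_{i\in I^\star}B_{i,j}(t)$. Since $|I^\star|=m>m-1=|J\setminus\{j\}|$, the pigeonhole principle forces $I^\star\cap([K]\setminus J)\neq\emptyset$. Pick any $i^\star\in I^\star\setminus J$. Because $i^\star\in I^\star$, we have
\[
\maxm{i\neq j}B_{i,j}(t)\;\leq\;B_{i^\star,j}(t),
\]
and because $i^\star\notin J$,
\[
B_{i^\star,j}(t)\;\leq\;\max_{i\notin J}B_{i,j}(t).
\]
Chaining these two inequalities yields the claim.

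There is no real obstacle here; the only mild subtlety is that the $m$-th maximum operator is defined via any top-$m$ selection, and one has to be explicit that ties can be resolved arbitrarily without affecting the inequality, which is why I would phrase the argument directly in terms of an arbitrary top-$m$ index set $I^\star$ rather than the sorted values. The argument is purely combinatorial and does not use the linear structure or any specific property of the gap indices $B_{i,j}(t)$ — it holds for any real-valued family indexed by ordered pairs.
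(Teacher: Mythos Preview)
Your proof is correct and rests on the same counting idea as the paper's. The paper phrases it via the variational identity $\maxm{i\neq j} B_{i,j}(t) = \min_{S\subseteq[K],\,|S|=m-1}\max_{i\notin (S\cup\{j\})} B_{i,j}(t)$ and then instantiates $S=J\setminus\{j\}$, which is the dual formulation of your pigeonhole argument on the top-$m$ index set $I^\star$.
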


Regarding the selection rule, UGapE selects the least sampled arm among $b_t$ and $c_t$, which coincides with the largest variance rule that we propose for the general linear setting: 
\vspace{-0.3cm}
\begin{itemize}
 \item ``largest variance'': $\argmax_{a \in \{b_t,c_t\}}  W_t(a)$.
\end{itemize}
\vspace{-0.3cm}
In the original version of LUCB, both arms $b_t$ and $c_t$ are sampled at time $t$, but the analysis that we propose in this paper obtains similar guarantees for LUCB using the largest variance rule, so we only consider this selection rule in the remainder of the paper. 

\paragraph{LinGapE} A striking example of adaptive sampling in linear BAI (thus, only for $m=1$) is LinGapE~\citep{xu2017fully}. The first novelty in LinGapE compared to UGapE and LUCB is the use of paired gap indices $B^{\text{pair}}_{i,j}(t)$ exploiting the arm features. Paired indices may indeed increase performance. First, it follows from the triangular inequality for the Mahalanobis norm $\|\cdot \|_{\hat\Sigma_t^{\lambda}}$ that  $||x_i-x_j||_{\hat\Sigma_t^{\lambda}} \leq ||x_i||_{\hat\Sigma_t^{\lambda}}+||x_j||_{\hat\Sigma_t^{\lambda}}$ for all pairs of arms $(i,j)$~; therefore, if both types of gap indices use the same threshold $C_{t,\delta}$, paired indices are smaller: $B_{i,j}^{\text{pair}}(t) \leq B_{i,j}^{\text{ind}}(t)$. 
Moreover, Lemma~\ref{lemma:paired_versus_individual} below (proved in Appendix~\ref{app:technical}) implies that paired or individual indices using arm features always yield smaller bounds on the gaps than individual indices without arm features, that take the form 
\[\hat{\Delta}_{i,j}(t) + C_{t,\delta}\left[\frac{1}{\sqrt{N_i(t)}} + \frac{1}{\sqrt{N_j(t)}}\right]\;. \]

\begin{lemma}\label{lemma:paired_versus_individual}
$\forall t > 0, \forall a \in \ARMS, \forall y \in \bR^N$,\\
	\[||y||_{\INVHATB{t}} \leq ||y||/\left(\sqrt{\NA{a}{t}||x_a||^2+\lambda}\right).\]
\end{lemma}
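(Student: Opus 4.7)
My plan is to exploit a positive-semidefinite minorization of the design matrix together with a Sherman--Morrison computation. First, writing $\HATB{t} = \lambda I_N + \sum_{b\in[K]} N_b(t)\,x_b x_b^\top$ and observing that every summand is PSD, I drop all terms with $b\neq a$ to obtain the Loewner bound $\HATB{t} \succeq \lambda I_N + N_a(t)\,x_a x_a^\top$. Matrix inversion is order-reversing on positive definite matrices, so $\INVHATB{t} \preceq (\lambda I_N + N_a(t)\,x_a x_a^\top)^{-1}$, which gives at once $\|y\|_{\INVHATB{t}}^2 \leq y^\top (\lambda I_N + N_a(t)\,x_a x_a^\top)^{-1} y$ for every $y \in \bR^N$.

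Second, I would evaluate this right-hand side in closed form, either by applying Sherman--Morrison to the rank-one perturbation of $\lambda I_N$, or equivalently by diagonalising $x_a x_a^\top$ in an orthonormal basis containing $x_a/\|x_a\|$. Both routes show that $(\lambda I_N + N_a(t)\,x_a x_a^\top)^{-1}$ acts as multiplication by $1/(\lambda + N_a(t)\|x_a\|^2)$ on the line $\mathrm{span}(x_a)$, which is precisely the regime relevant to the lemma's downstream uses: it is invoked through a triangle inequality with $y = x_i$, $a = i$ (and symmetrically $y = x_j$, $a = j$) to control $\|x_i\|_{\HATSIGMA{t}}$ and $\|x_j\|_{\HATSIGMA{t}}$ by the feature-free quantities $1/\sqrt{N_i(t)\|x_i\|^2+\lambda}$ and $1/\sqrt{N_j(t)\|x_j\|^2+\lambda}$. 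Plugging into the quadratic form and taking square roots then yields the stated bound.

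The step I expect to require most care is the algebraic simplification after Sherman--Morrison: one has to collect the residual $(x_a^\top y)^2$ contribution against the $\|y\|^2/\lambda$ term so as to express the resulting quadratic form with the single denominator $\lambda + N_a(t)\|x_a\|^2$. Beyond that, the argument is essentially routine linear algebra---no concentration inequality, martingale control, or exploration hypothesis is needed---since the statement is really a structural identity for the inverse of a regularised rank-one matrix.
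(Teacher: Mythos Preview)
Your first step---dropping the PSD summands with $b\neq a$ and using that inversion reverses the Loewner order on positive definite matrices---is correct and is a one-line replacement for the paper's inductive Sherman--Morrison argument (its Lemma~\ref{lemma:majoration_gap}).

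The second step, however, cannot be completed for general $y$, because the inequality in the lemma is \emph{false} as stated. After Sherman--Morrison one has
\[
y^\top\bigl(\lambda I_N + N_a(t)\,x_a x_a^\top\bigr)^{-1} y
\;=\; \frac{\|y\|^2}{\lambda} \;-\; \frac{N_a(t)\,(x_a^\top y)^2}{\lambda\bigl(\lambda + N_a(t)\|x_a\|^2\bigr)},
\]
and collapsing this to $\|y\|^2/\bigl(\lambda + N_a(t)\|x_a\|^2\bigr)$ is equivalent to $(x_a^\top y)^2 \geq \|x_a\|^2\|y\|^2$, i.e.\ the \emph{reverse} Cauchy--Schwarz inequality. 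For any $y\perp x_a$ the quadratic form equals $\|y\|^2/\lambda$, which strictly exceeds the claimed bound whenever $N_a(t)\|x_a\|^2>0$. So the ``careful algebraic simplification'' you flag is not merely delicate---it is impossible in general. The paper's own proof commits exactly this error: it invokes Cauchy--Schwarz on a term carrying a minus sign, which flips the inequality the wrong way.

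Your parenthetical observation is the right repair: every downstream use of the lemma takes $y=x_a$ (the pulled arm's own feature vector), and for $y\in\mathrm{span}(x_a)$ one has equality in Cauchy--Schwarz, so the bound holds---in fact with equality at the Sherman--Morrison step. Restricting the statement to $y=x_a$, your argument is complete and cleaner than the paper's; but as a proof of the lemma for arbitrary $y\in\bR^N$ it shares the paper's gap.
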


The second novelty in LinGapE is the proposed selection rules. Let $X$ be the matrix which columns are the arm contexts, and $\|\cdot\|_1 : v \mapsto \sum_{i} |v_i|$. Instead of selecting either $b_t$ or $c_t$, \citet{xu2017fully} propose two selection rules to possibly sample another arm that would reduce the variance of the estimate $\theta^\top(x_{b_t} - x_{c_t})$:
\vspace{-0.3cm}
\begin{itemize}
	\item ``greedy'': $\argmin_{a \in \ARMS} ||x_{b_t}-x_{c_t}||_{(\HATB{t-1}+x_ax_a^\top )^{-1}}$,
	\item ``optimized'': if $\colorboxed{white}{w^*(b_t,c_t) = \!\!\!\!\!\argmin_{w \in \bR^K : x_{b_t}-x_{c_t} = wX^T} \!\!\|w\|_1}$,
	\begin{equation}\label{eq:optimized_strategy}
	\argmax_{a \in \ARMS : w^*_{a}(b_t,c_t) > 0} \NA{a}{t}\frac{||w^*(b_t,c_t)||_1}{|w^*_{a}(b_t,c_t)|}.
	\end{equation}
\end{itemize}
\vspace{-0.3cm}

As explained by \cite{xu2017fully}, these two rules are meant to bring the empirical proportions of selections close to an optimal design $(\lambda_1,\dots,\lambda_K)$ asymptotically minimizing $\|x_{b_t} - x_{c_t}\|_{\left(\sum_{a} \lambda_a x_ax_a^\top\right)^{-1}}$.

\subsection{$m$-LinGapE and LinGIFA} 

Building on the interesting features of existing algorithms, we now propose two new algorithms for Top-$m$ identification in linear bandits, called $m$-LinGapE and LinGIFA and described in Table~\ref{tab:algorithm_comparison}. 

$m$-LinGapE is an extension of LinGapE~\citep{xu2017fully} to linear Top-$m$, which coincides with the original algorithm for $m=1$. It may also be viewed as an extension of LUCB using paired indices. We investigate three possible selection rules for this algorithm: largest variance, greedy and optimized. LinGIFA is inspired by UGapE~\citep{gabillon2012best}, and also uses paired gap indices. Note that LinGIFA has a unique $\texttt{compute\_bt}$ rule $b_t = \argmax_{j \in J(t)} \maxm{i \neq j} \ B_{i,j}(t)$, which does not coincide with any of the previously mentioned rules. In Table~\ref{tab:algorithm_comparison}, we considered the stopping rules of the original algorithms they derive from. As LinGapE is one of the most performant algorithms for linear BAI (see for instance experiments in~\citet{degenne2020gamification}), we expect $m$-LinGapE to work well for Top-$m$. However, the strength of LinGIFA is that it is completely defined in terms of gap indices, and, as such, can easily be tuned for performance by deriving tighter bounds on the gaps. Moreover, LinGIFA relies on a stopping rule that is more aggressive than that used by $m$-LinGapE, as discussed above. 

We emphasize that provided that the regularizing constant $\lambda$ in the design matrix is positive, both $m-$LinGapE and LinGIFA can be run without an initialization phase (as done in the experiments for LinGIFA). This permits to avoid the initial sampling cost when the number of arms is large that was noticed by~\cite{fiez2019sequential}. %First, the proposed sample complexity analysis in Subsection~\ref{subsec:SC} does not assume that, for any arm $a \in \ARMS$ and at a given time $t > 0$, $\NA{a}{t} > 0$~; it only needs $\NA{a_t}{t} > 0$ where $a_t$ is the arm pulled at time $t$, which is a trivial assumption. Moreover, in the linear setting, the width of the confidence interval $W_t(i,j)$ on any gap $\mu_i-\mu_j$ is always well-defined if $\lambda > 0$ -- which is necessary to compute gap indices in rules \texttt{compute\_bt} (for both $m$-LinGapE and LinGIFA) and \texttt{compute\_Jt} (for LinGIFA) in Table~\ref{tab:algorithm_comparison}. 

%\todoe{Emphasize more that we do not need the initialization for our analysis (if $\lambda > 0$)}

\section{THEORETICAL GUARANTEES}\label{sec:gifa_theoretical}

In this section, we present an analysis of $m$-LinGapE and LinGIFA. The fact that these algorithms are $(\varepsilon,m,\delta)$-PAC is a consequence of generic correctness guarantees that can be obtained for GIFA algorithms (even not fully specified) and is presented in Section~\ref{subsec:correctness}. %We further analyze the sample complexity in Section~\ref{subsec:SC}. 
We further analyze in Section~\ref{subsec:SC} the sample complexity of LUCB-like algorithms, which comprise $m$-LinGapE.

\subsection{Correctness of GIFA instances} \label{subsec:correctness}

We justify that the two stopping rules introduced above lead to $(\varepsilon,m,\delta)$-PAC algorithms, provided a condition on the gap indices given in Definition~\ref{def:good_choice}. 

\begin{definition}[Good gap indices]\label{def:good_choice}
\textnormal{Let us denote \[\EGIFA \triangleq \bigcap_{t > 0} \bigcap_{j \in \WORST} \bigcap_{k \in \TOPM} \left(B_{k,j}(t) \geq \mu_k - \mu_j\right).\] A \textit{good choice} of gap indices $(B_{i,j}(t))_{i,j \in \ARMS, t > 0}$ satisfies $\bP(\EGIFA) \geq 1-\delta$.}
\end{definition}

First, we observe that on the event $\EGIFA$ introduced in Definition~\ref{def:good_choice} both stopping rules $\tauUGapE$ and $\tauLUCB$ output an $(\varepsilon, \delta)$ correct answer.

\begin{theorem}\label{th:correctness_GIFA}On the event $\EGIFA$: {\textbf{\emph{(i)}}} any GIFA algorithm using $b_t = \argmax_{j \in J(t)} \max_{i \not\in J(t)} B_{i,j}(t)$ satisfies $J(\tauLUCB) \subseteq \TOP$ and \textbf{\emph{(ii)}} any GIFA algorithm using $b_t \in J(t)$ satisfies $J(\tauUGapE) \subseteq \TOP$.
\end{theorem}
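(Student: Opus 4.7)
The plan is to prove both parts by contradiction on the good event $\EGIFA$. Suppose $J(\tau) \not\subseteq \TOP$ at the (relevant) stopping time $\tau$; then there exists $j^\star \in J(\tau) \cap \WORST$, and since $|J(\tau)| = m = |\TOPM|$, a pigeonhole argument yields some $k^\star \in \TOPM \setminus J(\tau)$. A key bookkeeping fact I will use repeatedly is that $\TOPM \subseteq \TOP$ (because $\mu_k \geq \mu_m \geq \mu_m - \varepsilon$ for $k \in \TOPM$), so $\WORST \cap \TOPM = \emptyset$; in particular $j^\star \notin \TOPM$ and $\mu_k - \mu_{j^\star} > \varepsilon$ for every $k \in \TOPM$ (since $\mu_k \geq \mu_m$ and $\mu_{j^\star} < \mu_m - \varepsilon$).

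For part \textbf{(i)}, combining the definition of $b_t$ with that of $c_t$ gives
\[B_{c_\tau,b_\tau}(\tau) = \max_{j \in J(\tau)} \max_{i \notin J(\tau)} B_{i,j}(\tau).\]
Since $j^\star \in J(\tau)$ and $k^\star \notin J(\tau)$, the right-hand side is at least $B_{k^\star,j^\star}(\tau)$, which on $\EGIFA$ is at least $\mu_{k^\star} - \mu_{j^\star} > \varepsilon$. This contradicts the LUCB stopping condition $B_{c_\tau,b_\tau}(\tau) \leq \varepsilon$.

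For part \textbf{(ii)}, I will use the stopping condition $\max_{j \in J(\tau)} \maxm{i \neq j} B_{i,j}(\tau) \leq \varepsilon$ evaluated at $j = j^\star$. The point is that $\TOPM$ has exactly $m$ elements, none of which equals $j^\star$, so the $m$ values $\{B_{k,j^\star}(\tau) : k \in \TOPM\}$ are all candidates in the $m$-th-max over $\{i \neq j^\star\}$. On $\EGIFA$ each is bounded below by $\mu_k - \mu_{j^\star} > \varepsilon$, hence $\maxm{i \neq j^\star} B_{i,j^\star}(\tau) > \varepsilon$, contradicting the UGapE stopping condition. Note this argument never actually uses the value of $b_t$; the hypothesis $b_t \in J(t)$ only serves to specify what ``GIFA algorithm'' means here.

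No step looks particularly subtle; the only point deserving care is the pigeonhole that produces $k^\star \in \TOPM \setminus J(\tau)$ from $j^\star \in J(\tau) \cap \WORST$, which relies on $\TOPM \cap \WORST = \emptyset$ together with $|J(\tau)| = m$. Everything else is unpacking the definitions of $b_t$, $c_t$, $\maxm{}$, and the two stopping rules, plus a single application of the good-event inequality $B_{k,j}(t) \geq \mu_k - \mu_j$ for $k \in \TOPM$, $j \in \WORST$.
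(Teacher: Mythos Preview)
Your proof is correct and follows essentially the same approach as the paper: both argue by contradiction from a hypothetical $j^\star \in J(\tau)\cap\WORST$, use the cardinality $|\TOPM|=m$ together with $j^\star\notin\TOPM$ to force either a $k^\star\in\TOPM\setminus J(\tau)$ (part~(i)) or $m$ large values $B_{k,j^\star}(\tau)>\varepsilon$ (part~(ii)), and then invoke the good-event inequality to contradict the stopping condition. The only organizational difference is that the paper routes both cases through an intermediate claim ``there exists $c\in\TOPM$ with $B_{c,b}(t)\leq\varepsilon$'' before applying $\EGIFA$, whereas you apply $\EGIFA$ directly; your packaging is slightly more streamlined but the logical content is the same.
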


\begin{proof}
Let us assume by contradiction that there is an arm $b \in J(t) \CAP \WORST$ at stopping time $t$. We first prove that in both cases there exists $c \in \TOPM$ such that $B_{c,b}(t) \leq \varepsilon$ (*). Assuming that (*) does not hold, observe that $\TOPM \subseteq \{a \neq b, B_{a,b}(t) > \varepsilon\}$ \textbf{(a)} and then $\argmaxm{a \neq b} B_{a,b}(t) \in \{a \neq b,  B_{a, b}(t) > \varepsilon\}$ \textbf{(b)} since $b \not\in \TOPM$ and $|\TOPM| = m$. Depending on the definition of $b_t$ and on the stopping rule, we now split the proof into two parts:

\paragraph{\textbf{(i)}.} Using the definition of $\tauLUCB$, $c_t$ and $b_t$, it holds that $\forall b \in J(t), \forall c \not\in J(t), B_{c, b}(t) \leq \varepsilon$. Then, using \textbf{(a)}, $J(t)^c \subseteq \{a \neq b, B_{a,b}(t) \leq \varepsilon\}$, which means $\TOPM \CAP (J(t))^c = \emptyset$. Thus $\TOPM = J(t)$ (because $|\TOPM| = |J(t)| = m$) whereas $b \in J(t) \cap \WORSTM$, which is a contradiction. Hence (*) holds.
\paragraph{\textbf{(ii)}.} Using the definition of $\tauUGapE$, for any $b \in J(t)$, $\maxm{a \neq b}~B_{a,b}(t) \leq \epsilon$ holds. However, \textbf{(b)} means $\maxm{a \neq b}~B_{a,b}(t) > \epsilon$, which is a contradiction: (*) holds.

Then, at stopping time $t$, there exists $b \in J(t) \CAP \WORST$ and $ c \in \TOPM$ such that $B_{c, b}(t) \leq \varepsilon$. Using successively the definition of the event $\EGIFA$, and the fact that $c \in \TOPM$, this means that there exists $ b \in J(t)^c \CAP \WORSTM$ such that $\mu_b \geq \mu_m-\varepsilon$. Then $b \in \WORST \CAP \TOP$, which is absurd. Hence this proves that $J(t) \subseteq \TOP$ on the event $\EGIFA$ at stopping time $t$.
\end{proof}

It easily follows from Theorem~\ref{th:correctness_GIFA} that if $m$-LinGapE or LinGIFA are based on good gap indices in the sense of Definition~\ref{def:good_choice}, both algorithms are $(\varepsilon,m,\delta)$-PAC. We exhibit below a threshold for which the corresponding paired indices $B_{i,j}^{\text{pair}}(t)$ and individual indices $B_{i,j}^{\text{ind}}(t)$ (defined in Section~\ref{sec:setting}) are good gap indices. 

\begin{lemma}\label{lemma:frequentist_bound_correctness} For indices of the form $B_{i,j}(t) = \EMPGAP{i}{j}{t}+C_{\delta,t}W_t(i,j)$ with $W_t(i,j) = \|x_i-x_j\|_{\HATSIGMA{t}}$ (paired) or $W_t(i,j) = ||x_i||_{\HATSIGMA{t}}+||x_j||_{\HATSIGMA{t}}$ (individual)  with  
\begin{equation}
 C_{\delta,t}  =  \FREQUENTISTBETA{t}\label{def:PACthreshold},
\end{equation} 
where we recall that $\max_{a \in \ARMS} \|x_a\| \leq L$ and $\|\theta\|\leq S$, we have $\bP(\EGIFA) \geq 1 - \delta$.
\end{lemma}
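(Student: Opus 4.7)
The plan is to build a uniform-in-time ellipsoidal confidence region for $\theta$ centred at $\HATTHETA{t}$, then convert it into the claimed upper bounds on pairwise gaps via Cauchy--Schwarz. The starting point is the self-normalised tail inequality for vector-valued martingales of Abbasi-Yadkori, P\'al and Szepesv\'ari (2011): applied to the $\lambda$-regularised least-squares estimator in a linear model with $\sigma$-sub-Gaussian noise, it asserts that with probability at least $1-\delta$, simultaneously for every $t \geq 1$,
\[
\|\HATTHETA{t} - \theta\|_{\HATB{t}} \leq \sigma\sqrt{2\ln(1/\delta) + \ln(\det(\HATB{t})/\lambda^N)} + \sqrt{\lambda}\,S.
\]
Upper bounding $\det(\HATB{t})$ by $\bigl(\mathrm{tr}(\HATB{t})/N\bigr)^N$ (arithmetic--geometric mean on the eigenvalues) together with $\mathrm{tr}(\HATB{t}) \leq N\lambda + tL^2$ collapses the right-hand side into $\sigma\, C_{\delta,t}$, with $C_{\delta,t}$ as in~\eqref{def:PACthreshold}. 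Call this event $\mathcal{E}$; it has probability at least $1-\delta$.

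On $\mathcal{E}$, Cauchy--Schwarz applied to the pair of dual norms $\|\cdot\|_{\HATB{t}}$ and $\|\cdot\|_{\INVHATB{t}}$ gives, for every $y \in \R^N$ and every $t$,
\[
|y^\top (\HATTHETA{t} - \theta)| \leq \|y\|_{\INVHATB{t}}\,\|\HATTHETA{t} - \theta\|_{\HATB{t}} \leq C_{\delta,t}\,\|y\|_{\HATSIGMA{t}},
\]
where the last step uses $\HATSIGMA{t} = \sigma^2 \INVHATB{t}$. For the paired construction, taking $y = x_i - x_j$ yields $\mu_i - \mu_j \leq \EMPGAP{i}{j}{t} + C_{\delta,t}\|x_i - x_j\|_{\HATSIGMA{t}} = B^{\text{pair}}_{i,j}(t)$. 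For the individual construction, applying the inequality once with $y = x_i$ and once with $y = x_j$ gives $\mu_i \leq U_i(t)$ and $\mu_j \geq L_j(t)$, and subtracting produces $\mu_i - \mu_j \leq B^{\text{ind}}_{i,j}(t)$. Both implications hold simultaneously over $t \geq 1$ on $\mathcal{E}$, hence in particular for all pairs $(k,j) \in \TOPM \times \WORST$, so $\mathcal{E} \subseteq \EGIFA$ and therefore $\bP(\EGIFA) \geq \bP(\mathcal{E}) \geq 1 - \delta$.

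The only delicate step is bookkeeping: matching the output of the determinant--trace computation to the precise logarithmic term $N\ln(1 + (t+1)L^2/(\lambda^2 N))$ used in~\eqref{def:PACthreshold}, and checking that the normalisation by $\sigma$ between $C_{\delta,t}$ and the raw self-normalised bound is consistent with the definition $\HATSIGMA{t} = \sigma^{2}\INVHATB{t}$. Once this is verified, the lemma is an immediate by-product of the standard linear-bandit confidence toolbox, with Cauchy--Schwarz translating parameter uncertainty into gap uncertainty with no additional effort.
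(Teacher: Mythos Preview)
Your proposal is correct and follows essentially the same route as the paper: establish a uniform-in-time confidence ellipsoid for $\theta$ (the paper cites Lemma~4.1 of \citet{kaufmann2014analyse}, you invoke the equivalent Abbasi-Yadkori--P\'al--Szepesv\'ari self-normalised bound plus the determinant--trace inequality), then convert it into gap bounds via Cauchy--Schwarz with the dual norms $\|\cdot\|_{(\HATSIGMA{t})^{-1}}$ and $\|\cdot\|_{\HATSIGMA{t}}$. The only cosmetic difference is that the paper obtains the individual-index bound from the paired one by the triangle inequality on $\|x_i-x_j\|_{\HATSIGMA{t}}$, whereas you apply Cauchy--Schwarz separately to $x_i$ and $x_j$; both arguments are equivalent, and your acknowledged ``bookkeeping'' step (matching the determinant bound to the exact constants in~\eqref{def:PACthreshold}) is exactly what the paper off-loads to the cited lemma.
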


%\todoe{Le $\lambda$ dans le deuxième terme devrait être un $\sqrt{\lambda}$}

\begin{proof} The proof follows from the fact that 
\[\left\{\forall t \in \N^* : \|\HATTHETA{t}-\theta\|_{(\HATSIGMA{t})^{-1}} \leq C_{\delta,t}\right\} \subseteq \EGIFA\]
together with Lemma 4.1 in \cite{kaufmann2014analyse} which yields $\bP\left(\forall t \in \N^* : \|\HATTHETA{t}-\theta\|_{(\HATSIGMA{t})^{-1}} \leq C_{\delta,t}\right) \geq 1-\delta$. For paired indices, the inclusion follows from the fact that 
%\[\left|\EMPGAP{i}{j}{t} - (\mu_i - \mu_j)\right| \leq \|\HATTHETA{t}-\theta\|_{(\HATSIGMA{t})^{-1}} \|x_i - x_j\|_{\HATSIGMA{t}},\]
\[\left|(\EMPMU{i}{t}-\EMPMU{j}{t}) - (\mu_i - \mu_j)\right| \leq \|\HATTHETA{t}-\theta\|_{(\HATSIGMA{t})^{-1}} \|x_i - x_j\|_{\HATSIGMA{t}},\]
where we use that $x^\top y \leq \|x\|_{\Sigma^{-1}} \|y\|_{\Sigma}$ for any $x,y \in \R^N$ and positive definite $\Sigma \in \R^{N\times N}$. For individual indices, we further need the triangular inequality $\|x_i - x_j\|_{\HATSIGMA{t}} \leq \|x_i\|_{\HATSIGMA{t}} + \|x_j\|_{\HATSIGMA{t}}$ to prove the inclusion.
\end{proof}

\subsection{Sample complexity results}\label{subsec:SC}

We derive below a high-probability upper bound on the sample complexity of a subclass of GIFA algorithms which comprise $m$-LinGapE% and LinGIFA
, combined with different selection rules. More precisely, we upper bound the %ir
sample complexity on
\[\cE \triangleq \bigcap_{t > 0} \bigcap_{i,j \in \ARMS} \Big(\mu_i-\mu_j \in [-B_{j,i}(t), B_{i,j}(t)]\Big),\]
an event which is trivially included in $\cE^{\text{GIFA}}$ and which holds therefore with probability larger than $1-\delta$ with the choice of threshold \eqref{def:PACthreshold}. To state our results, we define the true gap of an arm $k$ as $\GAP{k} \triangleq \mu_{k}-\mu_{m+1}$ if $k \in \TOPM$, $\mu_m-\mu_k$ otherwise ($\GAP{k} \geq 0$ for any $k \in \ARMS$).

\vspace{0.2cm}

\begin{theorem}\label{th:upper_bounds_linear_topm}
For $m$-LinGapE, % and LinGIFA, 
on event $\cE$ on which algorithm $\cA$ is ($\varepsilon, m, \delta$)-PAC, stopping time $\tau_{\delta}$ satisfies $\tau_{\delta} \leq \inf \{u \in \bR^{*+}: u > 1+\HA{\cA}C_{\delta,u}^2 + \mathcal{O}(K)\}$, where:
\vspace{-0.3cm}
\paragraph{\textbf{(i).}} for $\cA$ $=$ $m$-LinGapE with the largest variance selection rule\footnote{or pulling both arms in $\{b_t,c_t\}$ at time $t$}~:
\vspace{-0.5cm}
\[\HA{\cA} \triangleq 4\sigma^2\colorboxed{white}{\sum_{a \in \ARMS}} \max \left(\varepsilon,\frac{\varepsilon+\GAP{a}}{3} \right)^{-2},\]
\vspace{-1cm}
\paragraph{\textbf{(ii).}} for $\cA$ $=$ $m$-LinGapE with the optimized selection rule in~\citet{xu2017fully}~:
\vspace{-0.5cm}
\[\HA{\cA} \triangleq \sigma^2\colorboxed{white}{\sum_{a \in \ARMS}}  \max_{i,j \in \ARMS} \frac{|w^*_a(i,j)|}{\max \left(\varepsilon, \frac{\varepsilon+\GAP{i}}{3}, \frac{\varepsilon+\GAP{j}}{3} \right)^2},\] 
where $w^*(i,j)$ satisfies Equation~\ref{eq:optimized_strategy} applied to the arm pair $(i,j)$.%,
%\item for $\cA$ $=$ LinGIFA with the ``largest variance'' selection rule, $\HA{\text{LinGIFA}} \triangleq 4\sigma^2\colorboxed{white}{\sum_{a \in \ARMS}}  \max \left(\varepsilon,\frac{\varepsilon+{\GAP{{a}}}}{3}\right)^{-2}$.
\end{theorem}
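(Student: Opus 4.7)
The plan is to argue on the event $\cE$, on which every $B_{i,j}(t)$ is a valid UCB on $\mu_i-\mu_j$ and is also at most $\hat{\Delta}_{i,j}(t)+2 W_t(i,j)$. Since the stopping rule of $m$-LinGapE is $B_{c_t,b_t}(t)\leq\varepsilon$, every round $t<\tau_\delta$ satisfies $B_{c_t,b_t}(t)>\varepsilon$, and I would extract from this inequality a lower bound on the noise term $W_t(b_t,c_t)$, convert it into a per-arm pull-count bound via Lemma~\ref{lemma:paired_versus_individual}, and sum over arms.

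\textbf{Key inequality.} The core of the proof is to establish that, on $\cE$, for every $t<\tau_\delta$,
\[ W_t(b_t,c_t) > \max\!\left(\varepsilon,\, (\varepsilon+\Delta_{a_t})/3\right). \]
The $\varepsilon$-part is immediate: because $J(t)$ is the empirical top-$m$, $\hat\mu_{b_t}(t)\geq\hat\mu_{c_t}(t)$ so $\hat\Delta_{c_t,b_t}(t)\leq 0$, and $W_t(b_t,c_t)=B_{c_t,b_t}(t)-\hat\Delta_{c_t,b_t}(t)\geq B_{c_t,b_t}(t)>\varepsilon$. For the gap-dependent part, event $\cE$ gives $\hat\Delta_{c_t,b_t}(t)\leq(\mu_{c_t}-\mu_{b_t})+W_t(b_t,c_t)$, so $\varepsilon<B_{c_t,b_t}(t)\leq(\mu_{c_t}-\mu_{b_t})+2W_t(b_t,c_t)$, i.e.\ $2 W_t(b_t,c_t)>\varepsilon+(\mu_{b_t}-\mu_{c_t})$. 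A case analysis on the positions of $b_t,c_t$ relative to $\TOPM,\WORSTM$ then lower-bounds $\mu_{b_t}-\mu_{c_t}$ in terms of $\Delta_{a_t}$. In the clean case $b_t\in\TOPM,\, c_t\in\WORSTM$ one has $\mu_{b_t}-\mu_{c_t}\geq\max(\Delta_{b_t},\Delta_{c_t})\geq\Delta_{a_t}$, so $W_t(b_t,c_t)>(\varepsilon+\Delta_{a_t})/2$. In the misclassified configurations one uses $c_t=\argmax_{i\notin J(t)}B_{i,b_t}(t)$ together with a witness arm $k\in\TOPM\setminus J(t)$ (whose existence follows from $|\TOPM|=|J(t)|=m$ as soon as $J(t)\cap\WORSTM\neq\emptyset$) to inject $\Delta_{a_t}$ into the inequality; the weaker factor $1/3$ appears after absorbing the slack $\mu_m-\mu_{m+1}$ present in these cases.

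\textbf{From key inequality to per-arm count (case (i)).} For the largest-variance rule, the triangle inequality for $\|\cdot\|_{\HATSIGMA{t}}$ yields $\|x_{b_t}-x_{c_t}\|_{\HATSIGMA{t}}\leq 2\|x_{a_t}\|_{\HATSIGMA{t}}$, upgrading the key inequality to $4 C_{t,\delta}^2\, \|x_{a_t}\|^2_{\HATSIGMA{t}}>\max(\varepsilon,(\varepsilon+\Delta_{a_t})/3)^2$. Since $\|x_{a_t}\|^2_{\HATSIGMA{t}}=\sigma^2 \|x_{a_t}\|^2_{(\HATB{t})^{-1}}$, Lemma~\ref{lemma:paired_versus_individual} applied with $y=x_{a_t}$ and $a=a_t$ bounds the right-hand factor by $1/N_{a_t}(t)$ (up to vanishing $\lambda$-terms), and inverting yields $N_{a_t}(t)\leq 4\sigma^2 C_{t,\delta}^2/\max(\varepsilon,(\varepsilon+\Delta_{a_t})/3)^2$. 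Since this holds whenever $a_t=a$, the per-arm counter satisfies $N_a(\tau_\delta)\leq 4\sigma^2 C_{\tau_\delta,\delta}^2/\max(\varepsilon,(\varepsilon+\Delta_a)/3)^2+1$; summing in $a$ and accounting for the optional initialization phase delivers $\tau_\delta\leq 1+\HA{\cA}\,C_{\delta,\tau_\delta}^2+\mathcal{O}(K)$, which is the implicit bound claimed.

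\textbf{Case (ii) and main obstacle.} For the optimized rule, the same scheme applies but the triangle-inequality step is replaced by the decomposition $x_{b_t}-x_{c_t}=\sum_a w^*_a(b_t,c_t)\,x_a$: expanding $\|x_{b_t}-x_{c_t}\|^2_{\HATSIGMA{t}}$ along this sum and exploiting that the optimized rule balances the ratios $N_a(t)/|w^*_a(b_t,c_t)|$ yields an arm-wise bound of the form $N_a(\tau_\delta)\leq \sigma^2 C_{\tau_\delta,\delta}^2 \max_{i,j}|w^*_a(i,j)|/\max(\varepsilon,(\varepsilon+\Delta_i)/3,(\varepsilon+\Delta_j)/3)^2$, which sums to the stated $\HA{\cA}$. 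The main obstacle I anticipate is the case analysis underlying the key inequality: pinning down the constants ($\max$ with $\varepsilon$ and the factor $1/3$) uniformly over all configurations of $(b_t,c_t)$ with respect to $(\TOPM,\WORSTM)$, and ensuring that the resulting lower bound always involves the gap of the \emph{actually pulled} arm, which may be either $b_t$ or $c_t$.
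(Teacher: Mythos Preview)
Your plan is correct and follows the paper's approach closely: the paper packages your key inequality as Lemma~\ref{lemma:m-LinGapE_bound}, namely $B_{c_t,b_t}(t)\leq\min\bigl(-(\Delta_{b_t}\lor\Delta_{c_t})+2W_t(b_t,c_t),\,0\bigr)+W_t(b_t,c_t)$, proves it by the same four-case analysis on the positions of $b_t,c_t$ relative to $\TOPM$, converts to a per-arm count exactly as you describe via the triangle inequality and Lemma~\ref{lemma:paired_versus_individual} (this is Lemma~\ref{lemma:get_upper_bound_na_variance}), and then sums via a dedicated counting lemma (Lemma~\ref{lemma:get_upper_bound_sample}) rather than the one-line argument you give; for part~(ii) it simply invokes Lemma~1 of \citet{xu2017fully}.

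Since the case analysis is the obstacle you flag, two refinements are worth noting. First, in the configuration $b_t\notin\TOPM,\,c_t\in\TOPM$ your bound $B_{c_t,b_t}(t)\leq(\mu_{c_t}-\mu_{b_t})+2W_t$ is too weak because $\mu_{c_t}-\mu_{b_t}>0$; the paper first uses $\hat\Delta_{b_t,c_t}(t)\geq 0$ to get $B_{c_t,b_t}(t)\leq W_t\leq B_{b_t,c_t}(t)$ and then bounds the latter by $(\mu_{b_t}-\mu_{c_t})+2W_t$, which has the useful sign. Second, when $b_t,c_t\in\TOPM$ the witness must be taken in $\WORSTM\cap J(t)$ (not $\TOPM\setminus J(t)$), and one needs not only the $c_t$-maximizer property but also the $b_t$-maximizer identity $B_{c_t,b_t}(t)=\max_{j\in J(t)}\max_{i\notin J(t)}B_{i,j}(t)$ to compare against $B_{c_t,c}(t)$ for that witness $c$.
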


\vspace{-0.2cm}

Upper bounds for all analyzed algorithms and classical counterparts are shown in Table~\ref{tab:sample_complexity_algorithms}.

\begin{table*}[t]
    \centering
    \caption{Sample complexity results for linear and classical Top-$m$ algorithms. $w^*$ is defined as in Equation~\eqref{eq:optimized_strategy}. The additive term in $K$ is due to the initialization phase. Our proposal is in bold type.}
    
    \vspace{0.2cm}
    
    \label{tab:sample_complexity_algorithms}
    \begin{tabular}{l|l|l}
        \textbf{Algorithm} & \textbf{Complexity constant $\HA{\cdot}$} & \textbf{Upper bound on $\tau_{\delta}$}\\
        \hline
	    LUCB  & $\colorboxed{white}{2\sum_{a \in \ARMS} \max \left(\frac{\varepsilon}{2},{\GAP{a}} \right)^{-2}}$ & $\inf_{u > 0} \left\{u > 1+\HA{\text{LUCB}}C_{\delta,u}^2 + K\right\}$\\
	    UGapE & $\colorboxed{white}{2\sum_{a \in \ARMS} \max \left(\varepsilon,\frac{\varepsilon+\GAP{a}}{2} \right)^{-2}}$ & $\inf_{u>0} \left\{u > 1+\HA{\text{UGapE}}C_{\delta,u}^2 + K\right\}$\\
         \hline
	    \textbf{$m$-LinGapE (1)} & $\colorboxed{white}{4\sigma^2\sum_{a \in \ARMS} \max \left(\varepsilon,\frac{\varepsilon+\GAP{a}}{3} \right)^{-2}}$ & $\inf_{u>0} \left\{ u > 1+\HA{\text{$m$-LinGapE(1)}}C_{\delta,u}^2\right\}$\\
	    \textbf{(largest variance)} & & \\
	    \textbf{$m$-LinGapE (2)} & $\colorboxed{white}{\sigma^2\sum_{a \in \ARMS} \max_{i,j \in \ARMS} \frac{|w^*_a(i,j)|}{\max \left(\varepsilon, \frac{\varepsilon+\GAP{i}}{3}, \frac{\varepsilon+\GAP{j}}{3} \right)^2}}$ & $\inf_{u>0} \left\{u > 1+\HA{\text{$m$-LinGapE(2)}}C_{\delta,u}^2 \right\}$\\
	    \textbf{(optimized)} & & \\
         %\textbf{LinGIFA} & $\colorboxed{white}{4\sigma^2\sum_{a \in \ARMS} \max \left(\varepsilon,\frac{\varepsilon+{\GAP{{a}}}}{3}\right)^{-2}}$ & $\inf \{u \in \bR^{*+}: u > 1+\HA{\text{LinGIFA}}C_{\delta,u}^2\}$\\
         & & \\
    \end{tabular}
    
    \vspace{-0.2cm}
\end{table*}

\subsection{Sketch of proof}

The proof of Theorem~\ref{th:upper_bounds_linear_topm} generalizes and extends the proofs for classical Top-$m$ and linear BAI%~\citep{gabillon2012best,xu2017fully}
~\citep{xu2017fully}, with paired or individual gap indices. We sketch below the sample complexity analysis of a specific subclass of GIFA algorithms, %$m$-LinGapE, 
postponing the proof of some auxiliary lemmas to Appendix~\ref{sec:upper_bounds}. %The analysis of LinGIFA follows a very similar structure and is given in Appendix~\ref{sec:upper_bounds}. The notable difference lies in the upper bound obtained for $B_{c_t,b_t}(t)$ at time $t$ (respectively Lemmas~\ref{lemma:m-LinGapE_bound} and~\ref{lemma:LinGIFA_bound} for $m$-LinGapE and LinGIFA). 
We %now 
focus on GIFA algorithms that use 
\[J(t) \triangleq \argmaxmset{i \in \ARMS}~\EMPMU{i}{t} \ \text{ and } \ b_t \triangleq \argmax_{j \in J(t)} \max_{i \notin J(t)} B_{i,j}(t)\;. \]
This includes LinGapE, $m$-LinGapE, but also LUCB. The key ingredient in the proof is the following lemma, which holds for any gap indices of the form $B_{i,j}(t) \triangleq \EMPMU{i}{t}-\EMPMU{j}{t} + W_t(i,j)$ for $i,j \in \ARMS^2$.

\begin{lemma}\label{lemma:m-LinGapE_bound}
On the event $\cE$, for all $t > 0$, 
\[B_{c_t, b_t}(t) \leq \min(-({\GAP{{b_t}}} \lor {\GAP{{c_t}}})+2W_t(b_t,c_t), 0)+W_t(b_t,c_t).\]
\end{lemma}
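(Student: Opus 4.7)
The plan is to reduce the claim to two simpler inequalities. Since the right-hand side equals $\min(-(\GAP{b_t}\lor\GAP{c_t})+3W_t(b_t,c_t),\, W_t(b_t,c_t))$, it suffices to establish separately that
\[
\text{(I)}\ B_{c_t,b_t}(t)\le W_t(b_t,c_t)
\qquad\text{and}\qquad
\text{(II)}\ B_{c_t,b_t}(t)\le -(\GAP{b_t}\lor\GAP{c_t})+3W_t(b_t,c_t).
\]
For (I), I would use the empirical-ordering definition of $J(t)$: because $J(t)$ is the empirical top-$m$, $b_t\in J(t)$ and $c_t\notin J(t)$ give $\EMPMU{b_t}{t}\ge\EMPMU{c_t}{t}$, hence $\EMPGAP{c_t}{b_t}{t}\le 0$ and $B_{c_t,b_t}(t)=\EMPGAP{c_t}{b_t}{t}+W_t(b_t,c_t)\le W_t(b_t,c_t)$.

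Inequality (II) is equivalent to $\EMPMU{b_t}{t}-\EMPMU{c_t}{t}\ge (\GAP{b_t}\lor\GAP{c_t})-2W_t(b_t,c_t)$, and I would prove it by case-splitting on whether each of $b_t,c_t$ lies in $\TOPM$ or $\WORSTM$. In the ``aligned'' case $b_t\in\TOPM$, $c_t\in\WORSTM$, the chain $\mu_{b_t}\ge\mu_m>\mu_{m+1}\ge\mu_{c_t}$ directly gives $\mu_{b_t}-\mu_{c_t}\ge\GAP{b_t}\lor\GAP{c_t}$, and applying the concentration event $\cE$ to the pair $(b_t,c_t)$ even yields the stronger bound with a $2W_t$ slack instead of $3W_t$.

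For the remaining ``misaligned'' cases, the idea is to turn the algorithm's argmax rules into lower bounds on $W_t(b_t,c_t)$. If $c_t\in\TOPM$, then $\TOPM\setminus J(t)\ne\emptyset$ and pigeonhole yields some $b'\in J(t)\cap\WORSTM$; the maximality of $b_t$ combined with $\cE$ on $(c_t,b')$ and $\mu_{b'}\le\mu_{m+1}$ gives $B_{c_t,b_t}(t)\ge B_{c_t,b'}(t)\ge\mu_{c_t}-\mu_{b'}\ge\GAP{c_t}$, which via (I) forces $W_t(b_t,c_t)\ge\GAP{c_t}$. Symmetrically, if $b_t\in\WORSTM$, any $a^*\in\TOPM\setminus J(t)$ together with the definition of $c_t$ yield $W_t(b_t,c_t)\ge\GAP{b_t}$. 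Armed with these lower bounds, each of the three remaining subcases (both arms in $\TOPM$; both arms in $\WORSTM$; or $b_t\in\WORSTM$ with $c_t\in\TOPM$) closes by a short computation combining one of them with $\cE$ applied to $(b_t,c_t)$ and/or with $\EMPGAP{c_t}{b_t}{t}\le 0$.

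The part I expect to be the main obstacle is the two ``same-side'' subcases, where the pigeonhole argument only lower-bounds $W_t(b_t,c_t)$ by one of $\GAP{b_t},\GAP{c_t}$ while the right-hand side of (II) involves their maximum. Here I would exploit the identity $\mu_{b_t}-\mu_{c_t}=\pm(\GAP{b_t}-\GAP{c_t})$ that holds on a single side of the boundary, so that the concentration bound $\EMPMU{b_t}{t}-\EMPMU{c_t}{t}\ge\mu_{b_t}-\mu_{c_t}-W_t(b_t,c_t)$ from $\cE$ combines with the single-gap lower bound on $W_t$ to absorb the missing gap into the $-2W_t$ slack on the right-hand side.
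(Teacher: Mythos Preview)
Your proposal is correct and follows essentially the same approach as the paper: both rely on Property~1 ($\EMPMU{b_t}{t}\ge\EMPMU{c_t}{t}$ from the definition of $J(t)$), Property~2 ($B_{c_t,b_t}(t)=\max_{j\in J(t)}\max_{i\notin J(t)}B_{i,j}(t)$), and a four-case split on the membership of $b_t,c_t$ in $\TOPM$, with the ``misaligned'' cases handled by the same pigeonhole arguments producing $W_t(b_t,c_t)\ge\GAP{b_t}$ or $W_t(b_t,c_t)\ge\GAP{c_t}$. Your organization is slightly cleaner in that you factor the pigeonhole bounds once (triggered by $b_t\in\WORSTM$ or $c_t\in\TOPM$) rather than re-deriving them inside each case, but the substance is identical.
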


%\todoe{Emphasize that it required original elements of proof} 
This result is a counterpart of Lemma $4$ in~\citet{xu2017fully}, but does not require $|J(t)|=1$ at a given time $t > 0$, notably by noticing that, by definition of $b_t$ and $c_t$, $B_{c_t,b_t}(t) = \max_{j \in J(t)} \max_{i \not\in J(t)} B_{i,j}(t)$. In order to get the upper bound in Theorem~\ref{th:upper_bounds_linear_topm} for $m$-LinGapE using the optimized rule, one can straightforwardly apply Lemma $1$ in~\citet{xu2017fully} to the inequality stemming from Lemma~\ref{lemma:m-LinGapE_bound}. For the selection rules which either select both $b_t$ and $c_t$, or the arm in $\{b_t,c_t\}$ with the largest variance term, by combining Lemma~\ref{lemma:m-LinGapE_bound} with the definition of the stopping rule $\tauLUCB$, we obtain the following upper bound on $\NA{a_t}{t}$, where $a_t$ is (one of) the pulled arm(s) at time $t < \tau_{\delta}$. 

\begin{lemma}\label{lemma:get_upper_bound_na_variance}
$\forall t > 0, \tau_{\delta} > t, \NA{a_t}{t} \leq T^{*}(a_t, \delta, t)$, where $a_t$ is a pulled arm at time $t$, and 

\vspace{-0.5cm}

\[T^{*}(a_t, \delta, t) = 4\sigma^2 C_{\delta,t}^2 \max \left( \varepsilon, \frac{\varepsilon+\GAP{a_{t}}}{3}\right)^{-2}.\]
\end{lemma}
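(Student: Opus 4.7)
The plan is to chain three inequalities that convert the non-termination hypothesis $\tau_\delta > t$ into an upper bound on $N_{a_t}(t)$: first derive a lower bound on the paired width $W_t(b_t,c_t)$ from the fact that $\tauLUCB$ has not triggered, then transfer it to the individual width $W_t(a_t)$ of the pulled arm via the largest-variance selection rule, and finally convert this width bound into the desired count bound using Lemma~\ref{lemma:paired_versus_individual}.

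Since $\tau_\delta > t$, the stopping condition fails, yielding $B_{c_t,b_t}(t) > \varepsilon$. Combined with Lemma~\ref{lemma:m-LinGapE_bound} on $\cE$, reading off each of the two arguments of the $\min$ gives both $W_t(b_t,c_t) > \varepsilon$ and $3 W_t(b_t,c_t) > \varepsilon + (\GAP{b_t} \lor \GAP{c_t})$. Since $a_t \in \{b_t,c_t\}$, this yields
\[
W_t(b_t,c_t) \;>\; \max\!\left(\varepsilon,\ \frac{\varepsilon + \GAP{a_t}}{3}\right).
\]
To transfer this to the pulled arm, observe that for both index types $W_t(b_t,c_t) \leq W_t(b_t) + W_t(c_t)$: this holds by definition for individual indices, and by the Mahalanobis triangle inequality $\|x_{b_t} - x_{c_t}\|_{\HATSIGMA{t}} \leq \|x_{b_t}\|_{\HATSIGMA{t}} + \|x_{c_t}\|_{\HATSIGMA{t}}$ for paired indices. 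Under the largest-variance rule $a_t = \argmax_{a \in \{b_t,c_t\}} W_t(a)$, this gives $W_t(b_t,c_t) \leq 2 W_t(a_t)$; the same conclusion holds when both $b_t$ and $c_t$ are pulled, by naming $a_t$ to be whichever of them has the larger $W_t(\cdot)$. Hence $W_t(a_t) > \tfrac{1}{2}\max(\varepsilon, (\varepsilon + \GAP{a_t})/3)$.

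The final step applies Lemma~\ref{lemma:paired_versus_individual} with $y = x_{a_t}$ and $a = a_t$, which yields $\|x_{a_t}\|_{\HATSIGMA{t}}^2 \leq \sigma^2 \|x_{a_t}\|^2 / (N_{a_t}(t) \|x_{a_t}\|^2 + \lambda) \leq \sigma^2 / N_{a_t}(t)$, so $W_t(a_t)^2 \leq \sigma^2 C_{\delta,t}^2 / N_{a_t}(t)$. Combining this with the preceding lower bound on $W_t(a_t)$ and squaring gives $\sigma^2 C_{\delta,t}^2 / N_{a_t}(t) > \tfrac{1}{4} \max(\varepsilon, (\varepsilon + \GAP{a_t})/3)^2$, which rearranges precisely to $N_{a_t}(t) < T^{*}(a_t, \delta, t)$. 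The main subtlety is the transfer step: the factor of $2$ lost there is exactly what produces the constant $4\sigma^2$ in $T^*$, and the bookkeeping has to cover paired and individual indices uniformly as well as both the largest-variance and the both-arms-pulled selection rules. The remaining steps are routine given the two prior lemmas.
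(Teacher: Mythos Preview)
Your proof is correct and follows essentially the same route as the paper: both combine the non-stopping condition with Lemma~\ref{lemma:m-LinGapE_bound} to lower-bound $W_t(b_t,c_t)$, then use the triangle inequality and the largest-variance rule to get $W_t(b_t,c_t)\leq 2W_t(a_t)$, and finally apply Lemma~\ref{lemma:paired_versus_individual} with $y=x_{a_t}$ to convert into a bound on $N_{a_t}(t)$. Your treatment of the both-arms-pulled case and of paired versus individual indices matches the paper's, and the strict inequality you obtain implies the non-strict one stated in the lemma.
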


Finally, the upper bound in Theorem~\ref{th:upper_bounds_linear_topm} is a consequence of the following result.

\begin{lemma}\label{lemma:get_upper_bound_sample}
Let $T^{*} : \ARMS \times (0,1) \times \mathbb{N}^{*} \rightarrow \bR^{*+}$ be a function that is nondecreasing in $t$, and $\cI_t$ the set of pulled arms at time $t$. Let $\cE$ be an event such that for all $t < \tau_{\delta}, \delta \in (0,1)$, $\exists a_t \in \cI_t, \NA{a_t}{t}\leq T^{*}(a_t, \delta, t) $. Then it holds on the event $\cE$ that $\tau_\delta \leq T(\mu,\delta)$ where \[T(\mu, \delta) \triangleq \inf \left\{u \in \bR^{*+}: u > 1+\sum_{a=1}^{K} T^{*}(a, \delta, u) \right\}.\]
\end{lemma}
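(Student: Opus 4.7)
The plan is a counting argument followed by a proof by contradiction.

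First I would establish the pointwise inequality $t \leq \sum_{a=1}^K T^{*}(a,\delta,t)$ for every positive integer $t < \tau_\delta$. For each round $s \in \{1,\ldots,t\}$, the hypothesis yields a witness arm $a_s \in \cI_s$ with $N_{a_s}(s) \leq T^{*}(a_s,\delta,s)$; I would fix one such choice per round. Setting $W_a := \{s \leq t : a_s = a\}$ for each $a \in [K]$, the assignment ensures $\sum_{a} |W_a| = t$. If $W_a \neq \emptyset$, let $s^{\star}_a := \max W_a$. Because $a_s = a \in \cI_s$ implies $a$ is actually pulled at round $s$, the counter $N_a$ increases by at least one at every element of $W_a$, so $N_a(s^{\star}_a) \geq |W_a|$. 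Combining this with the witness inequality at $s^{\star}_a$ and the monotonicity of $T^{*}(a,\delta,\cdot)$ gives $|W_a| \leq N_a(s^{\star}_a) \leq T^{*}(a,\delta,s^{\star}_a) \leq T^{*}(a,\delta,t)$; this bound is also trivially true when $W_a = \emptyset$. Summing over $a$ yields the claimed inequality.

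I would then conclude by contradiction. Suppose $\tau_\delta > T(\mu,\delta)$. By definition of the infimum, there exists a real $u < \tau_\delta$ satisfying $u > 1 + \sum_{a} T^{*}(a,\delta,u)$. Since $T^{*} > 0$, this forces $u > 1$, so $t := \lfloor u \rfloor$ is a positive integer with $t \leq u < \tau_\delta$, and the counting bound applies: $t \leq \sum_{a} T^{*}(a,\delta,t) \leq \sum_{a} T^{*}(a,\delta,u)$ by monotonicity. Hence $u - 1 \leq t \leq \sum_{a} T^{*}(a,\delta,u)$, contradicting the strict inequality $u > 1 + \sum_{a} T^{*}(a,\delta,u)$.

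The main subtlety, and the reason one cannot simply write $\tau_\delta - 1 = \sum_{a} N_a(\tau_\delta - 1)$ and bound each $N_a$ directly, is that the hypothesis only guarantees \emph{some} witness arm per round rather than a uniform control on every pulled arm's count. Assigning each round to a single witness and then exploiting the last occurrence $s^{\star}_a$ to convert the per-round bound on $N_a(s^{\star}_a)$ into a cumulative bound on $|W_a|$ is the key step. A secondary subtlety is that the set $\{u : u > 1 + \sum_{a} T^{*}(a,\delta,u)\}$ need not be an upper set, which is precisely why I would argue by contradiction with an \emph{arbitrary} real $u$ from it rather than working directly at $t = \tau_\delta - 1$.
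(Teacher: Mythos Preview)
Your argument is correct. The counting step and the closing step are both organised differently from the paper, though the underlying idea is the same.

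For the counting part, the paper fixes an integer horizon $T$, writes $\min(\tau_\delta,T)$ as a sum of indicators $\mathds{1}(t<\tau_\delta)$, then decomposes each round by the pair (arm, current count value) and uses that for a fixed arm $a$ and fixed value $m$ the event $\{a\in\cI_t,\,N_a(t)=m\}$ can occur for at most one $t$. This yields $\min(\tau_\delta,T)\le 1+\sum_a T^*(a,\delta,T)$. Your route---assign every round to one witness, group by arm, and at the \emph{last} witness round $s^\star_a$ convert $N_a(s^\star_a)\ge |W_a|$ into $|W_a|\le T^*(a,\delta,t)$---is a cleaner packaging of the same count and gives the slightly sharper pointwise inequality $t\le\sum_a T^*(a,\delta,t)$ without the additive $+1$. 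For the conclusion, the paper simply picks $T$ in the defining set and reads off $\min(\tau_\delta,T)<T$, whereas you argue by contradiction through an arbitrary $u$ in that set and floor it; this extra care pays off because, as you note, the set $\{u:u>1+\sum_a T^*(a,\delta,u)\}$ need not be an upper set, so working at one specific point is the right move. Both approaches rely on evaluating $T^*(a,\delta,u)$ at a real $u$, which is already implicit in the lemma's definition of $T(\mu,\delta)$.
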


\subsection{Discussion}\label{sec:discussion_exp}

If $\HA{\cA}$ is the complexity constant associated with bandit algorithm $\cA$ and bandit instance $\mu \in \bR^{K}$, then, by looking at Table~\ref{tab:sample_complexity_algorithms}, we can first notice that %$\HA{\text{LUCB}} \geq \HA{\text{m-LinGapE(1)}} = \HA{\text{LinGIFA}} \geq \HA{\text{UGapE}}$. 
\[\HA{\text{LUCB}} \geq \HA{\text{$m$-LinGapE(1)}} \geq \HA{\text{UGapE}}.\]

Note that the quantity $w^*_a(i,j)$ featured in the complexity quantity associated to $m$-LinGapE with the optimized selection rule ($m$-LinGapE(2)) is not \emph{a priori} bounded by a constant -- see Equation~\eqref{eq:optimized_strategy}. Authors in~\citep{xu2017fully} have shown that in specific instances, $\HA{\text{$m$-LinGapE(2)}} \leq \frac{9}{8}\HA{\text{UGapE}}$. To further compare these two complexity quantities, we designed the following experiment to estimate how many times $\HA{\text{$m$-LinGapE(2)}} \leq \HA{\text{UGapE}}$: $K \times N$ values are randomly sampled from a Gaussian distribution $N(0, D)$. From these we build matrix $X' \in \bR^{N \times K}$, which is then column-normalized (norm $\|\cdot\|$) to yield feature matrix X. We use $\theta = e_1$, where $e_1 = [1, 0, \dots, 0]^\top  \in \bR^K$ and $\|\theta\|=1$. Then we compute both constants $\HA{\text{$m$-LinGapE(2)}}$ and $\HA{\text{UGapE}}$ where $\mu = X\theta$ and $m=\left[ \frac{K}{3} \right]$, check whether $\HA{\text{$m$-LinGapE(2)}} \leq \HA{\text{UGapE}}$. We perform this experiment $1,000$ times and report in Table~\ref{tab:comparison_complexity} (in Appendix~\ref{sec:complexity_constants}) the fraction of times this condition holds, for multiple values of $K$ (number of arms), $N$ (dimension) and $D$ (variance of the Gaussian distribution). We observe that, in these artificially generated instances, the condition $\HA{\text{$m$-LinGapE(2)}} \leq \HA{\text{UGapE}}$ is seldom verified. 

Hence, our theoretical analysis suggests that, in these cases --which usually turn out to be really hard linear instances-- LinGIFA, which structure is similar to that of UGapE, may perform better than $m$-LinGapE. However, as we will see below, the practical story is different and $m$-LinGapE with the optimized selection rule can be as efficient, and often better than LinGIFA using the ``largest variance'' selection rule.

\section{EXPERIMENTAL STUDY}\label{sec:experiments}

In this section, we report the results of experiments on simulated data and on a simple drug repurposing instance. In all experiments, we use $\sigma = 1/2$, $\varepsilon = 0$, $\delta=5\%$. For all artificially generated experimental settings, we use Gaussian arms, that is, at time $t$, $r_{t} \thicksim N(\mu_{a_t}, \sigma^2)$. For the tuning of our algorithm we set the regularization parameter to $\lambda = \sigma/20$, following \citet{hoffman2014correlation}. In an effort to respect the trade-off between identification performance and speed, we set $C_{\delta,t} = \sqrt{2\ln \left( (\ln(t)+1)/\delta \right)}$ (``heuristic'' value) if the empirical error remained below $\delta$ instead of the theoretically valid threshold \eqref{def:PACthreshold} that guarantees all algorithms to be $(\varepsilon, m, \delta)-$PAC. Otherwise, we use \eqref{def:PACthreshold}, and for classical bandits, we use the threshold used in~\citet{kalyanakrishnan2012pac}. We consider the following experimental settings.

\paragraph{Classic instances} We consider $K$ arms such that, for $a\in[m]$, $x_a = e_1+e_a$, $x_{m+1} = \cos(\omega)e_1 + \sin(\omega)e_{m+1}$, and for any $a > m+1$, $x_a = e_{a-1}$, and $\theta = e_1$, where $(e_i)_{i \in [K-1]}$ are the canonical basis of $\bR^{K-1}$. This construction extends the usual ``hard'' instance for BAI in linear bandits ($m=1$), originally proposed by \citet{soare14BAIlin} where one considers three arms, of respective feature vectors $x_1 = [1, 0]^\top , x_2 = [0, 1]^\top $ and $x_3 = [\cos(\omega), \sin(\omega)]^\top $, and $\theta = e_1$, where $\omega \in (0, \frac{\pi}{2})$. Arm $1$ is the best arm, and as $\omega$ decreases, it becomes harder to discriminate between arms $1$ and $3$, and as such, pulling suboptimal arm $2$ might be useful.
\vspace{-0.3cm}
\paragraph{Drug repurposing instance} Our drug-scoring function relies on the simulation of the treatment effect on gene activity via a Boolean network~\citep{kauffman1969metabolic}. The details are laid out in Appendix~\ref{sec:dr_instance}. When a drug is selected, a reward is generated by applying the simulator to the gene activity profile of a patient chosen at random, and computing the cosine score between the post-treatment gene activity profile and the healthy gene activity profile. The higher this score is, the most similar the final treated patient and healthy samples are. We consider as drug features the treated gene activity profiles, which belong to $\R^{71}$.
\vspace{-0.3cm}
\paragraph{Results} Each figure reports the empirical distribution of the sample complexity (estimated over $500$ runs) for different algorithms. In all the experiments, the empirical error is always below $5\%$, and is reported in Table~\ref{tab:error_frequencies}. 

%\todoe{Maybe instead of the full table, sumarize its content in a sentence?}

Figure~\ref{fig:boxplots_a}(a) reports the results for a classic problem with $m=2, K=4$ and $N=3$. We compare two algorithms that are not using the arm features, LUCB and UGapE, with GIFA algorithms based on contextual indices. We investigate the use of individual and paired gap indices, different selection rules, and the two stopping rules $\tauLUCB$ and $\tauUGapE$ in both $m$-LinGapE and LinGIFA. We observe that there is almost no difference in the use of either stopping rules, but using paired indices leads to noticeably better sample complexity than individual ones. As noticed in~\citet{xu2017fully} for $m=1$, $m$-LinGapE with the optimized and the greedy sampling rules have similar performance. In addition, using the greedy or optimized rules leads to slightly better performance compared to the largest variance rule. More importantly, we observe that linear bandit algorithms largely outperform their classical counterparts, even on a rather easy instance ($\omega=\frac{\pi}{6}$, hence $\mu_{m}-\mu_{m+1} \approx 0.13$). 

 Figure~\ref{fig:boxplots_a}(b) shows the empirical sample complexity on a hard Top-$1$ classic problem ($\omega =0.1$, $\mu_1-\mu_{2} \approx 0.005$) which allows to compare our algorithms to the state-of-the-art LinGame algorithm. $m-$LinGapE and LinGIFA have a better performance than LinGame, which is encouraging, even if our algorithms do not have any optimality guarantees in theory. 
 
 Figure~\ref{fig:boxplots_a}(c) displays the result for a drug repurposing instance for epilepsy, including $K=10$ arms (drugs) with $5$ anti-epileptic and $5$ pro-convulsant drugs, hence the choice $m=5$ (a close-up is shown in Appendix~\ref{sec:dr_instance} without LUCB). Although the linear dependency between rewards and features may be violated in this real-world example, taking into account features still helps in considerably reducing the sample complexity (approximately by a factor $\frac{1}{2}$). 

\begin{table}
	\caption{Error frequencies rounded to $5$ decimal places, for each Top-$m$ and BAI algorithm (averaged across $500$ runs). Proposed algorithms' names are in bold type. Each column corresponds to one figure.}
	\label{tab:error_frequencies}
	\begin{tabular}{l|c|c|c}
		\textbf{Algorithm} & \textbf{(1)a} & \textbf{(1)b} & \textbf{(1)c}\\
		\hline
		\textbf{$m$-LinGapE (greedy)} & $0.0$ & $0.044$ & $0.0$ \\
		\textbf{$m$-LinGapE (optimized)} & $0.0$ & - & $0.0$\\
		\textbf{LinGIFA (largest var.)} & $0.0$ & - & $0.0$\\
		\textbf{LinGIFA} & & & \\
		\textbf{($\tauLUCB$, largest var.)} & $0.0$ & - & -\\
		\textbf{LinGIFA ($\tauLUCB$, greedy)} & $0.0$ & - & -\\
		\textbf{LinGIFA (greedy)} & $0.0$ & $0.0$ & $0.0$\\
		\textbf{LinGIFA} & & & \\
		\textbf{(individual indices)} & $0.0$ & - & -\\
		LUCB (largest var.) & $0.0$ & - & $0.0$\\
		LUCB (sampling both arms) & - & - & $0.0$\\
		UGapE & $0.0$ & - & -\\
		LinGame & - & $0.0$ & -\\
	\end{tabular}
\vspace{-0.5cm}
\end{table}

\begin{figure}[h]
\includegraphics[scale=0.16]{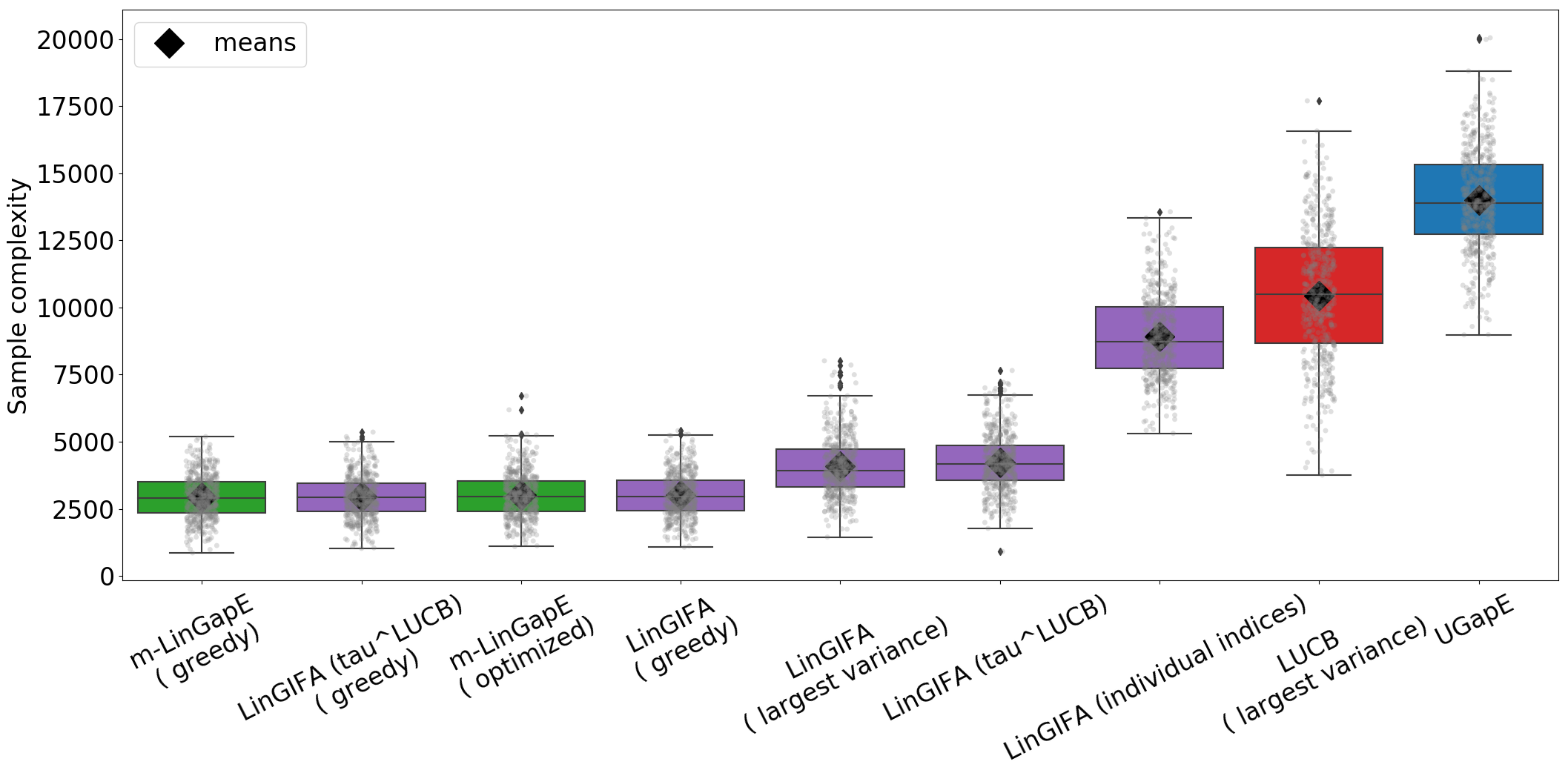}
\includegraphics[scale=0.16]{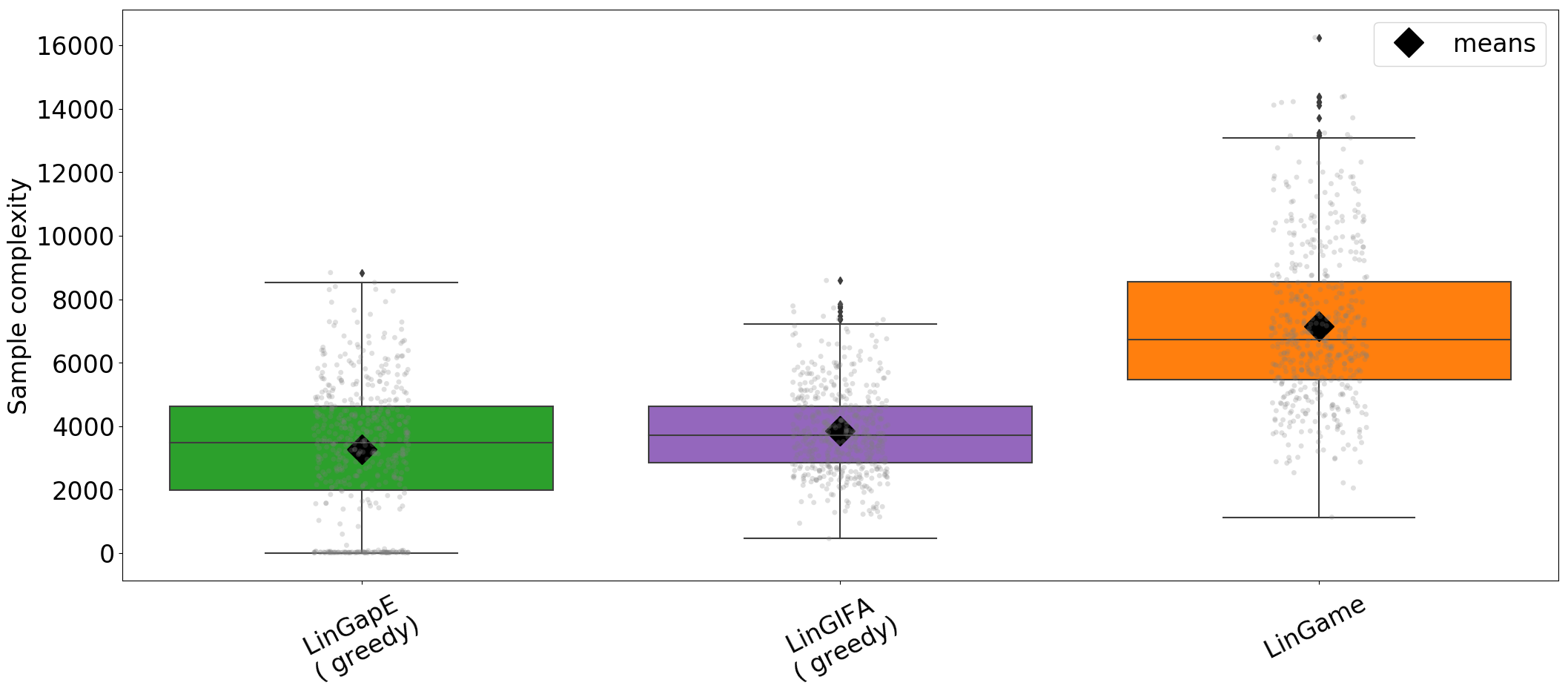}
\includegraphics[scale=0.16]{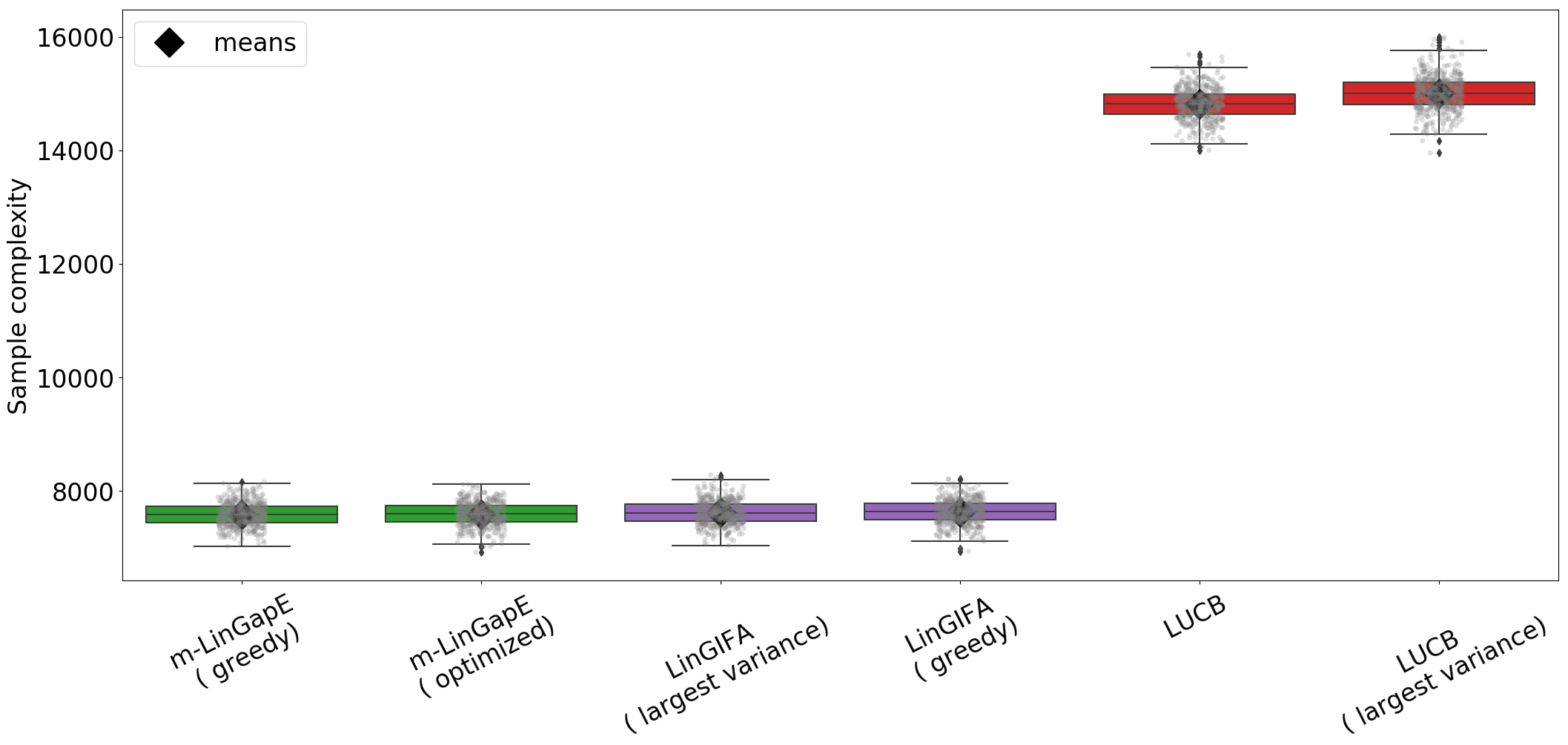}
\caption{From top to bottom: classic instances (a) $K=4$, $\omega=\frac{\pi}{6}$, $m=2$~; (b) $K=3$, $\omega=0.1$, $m=1$~; (c) drug repurposing instance $K=10$, $m=5$. Lines are quantiles and jittered individual outcomes are plotted in grey.}
\label{fig:boxplots_a}
\vspace{-0.5cm}
\end{figure}

\section{DISCUSSION}\label{sec:discussion}

To our knowledge, we have provided the first unified framework and fully adaptive algorithms for linear Top-$m$. Our theoretical analysis shows our algorithms do not perform any worse than their classical counterparts. Code is publicly available at \texttt{https://github.com/clreda/linear-top-m}. However, in real life and in our drug repurposing instance, the linear dependency between features and rewards does not hold. A future direction of our work would be dealing with such model misspecification. Another perspective would be the analysis of the greedy sampling rule. Indeed, this sampling rule leads to more efficient algorithms in our linear experiments.

\section*{ACKNOWLEDGEMENTS}
	The authors thank the anonymous reviewers for their feedback, Xuedong Shang for his valuable input for the implementation of LinGame, and Baptiste Porte for his help on building the drug repurposing instance applied to epilepsy. This work was supported by the Institut National pour la Sant\'{e} et la Recherche M\'{e}dicale (INSERM, France), the Centre National de la Recherche Scientifique (CNRS, France), the Universit\'{e} Sorbonne Paris Nord, the Universit\'{e} de Paris, France, the French Ministry of Higher Education and Research [ENS.X19RDTME-SACLAY19-22] (C. R.), the French National Research Agency [ANR-18-CE17-0009-01 and ANR-18-CE37-0002-03] (A. D.-D.), [ANR-19-CE23-0026-04] (E. K.), and by the ``Digital health challenge'' Inserm CNRS joint program (C. R., E. K. and A. D.-D.).

%\cleardoublepage

\bibliographystyle{plainnat}
\bibliography{biblio.bib} 

\appendix

% repasser au format single colon pour l'appendix est autorisé 
\onecolumn

\section{COMPARING COMPLEXITY CONSTANTS}\label{sec:complexity_constants}

\subsection{About the classic instance for linear Top-$m$ identification}

In higher dimensions, and when $m=K-2$, $\omega$ can be seen as the angle between the $m+1$-best arm vector and the hyperplane formed by the $m$ best arm feature vectors. In order to check if, for $m \geq 1$, decreasing the value of $\omega \in (0, \frac{\pi}{2})$ yields to harder instances (as it is for $m=1$), we ran the bandit algorithms on the instance $K=4$, $N=3$, $m=2$ for $\omega \in \{\frac{\pi}{3}, \frac{\pi}{6}\}$. The resulting boxplots are shown in Figure~\ref{fig:boxplots_supplementary}. It can then be seen that indeed, for all algorithms, the empirical average sample complexity increases as $\omega$ decreases, which is an argument in favour of the use of this type of instance for the test of linear Top-$m$ algorithms.

\begin{figure}[ht]
\includegraphics[scale=0.3]{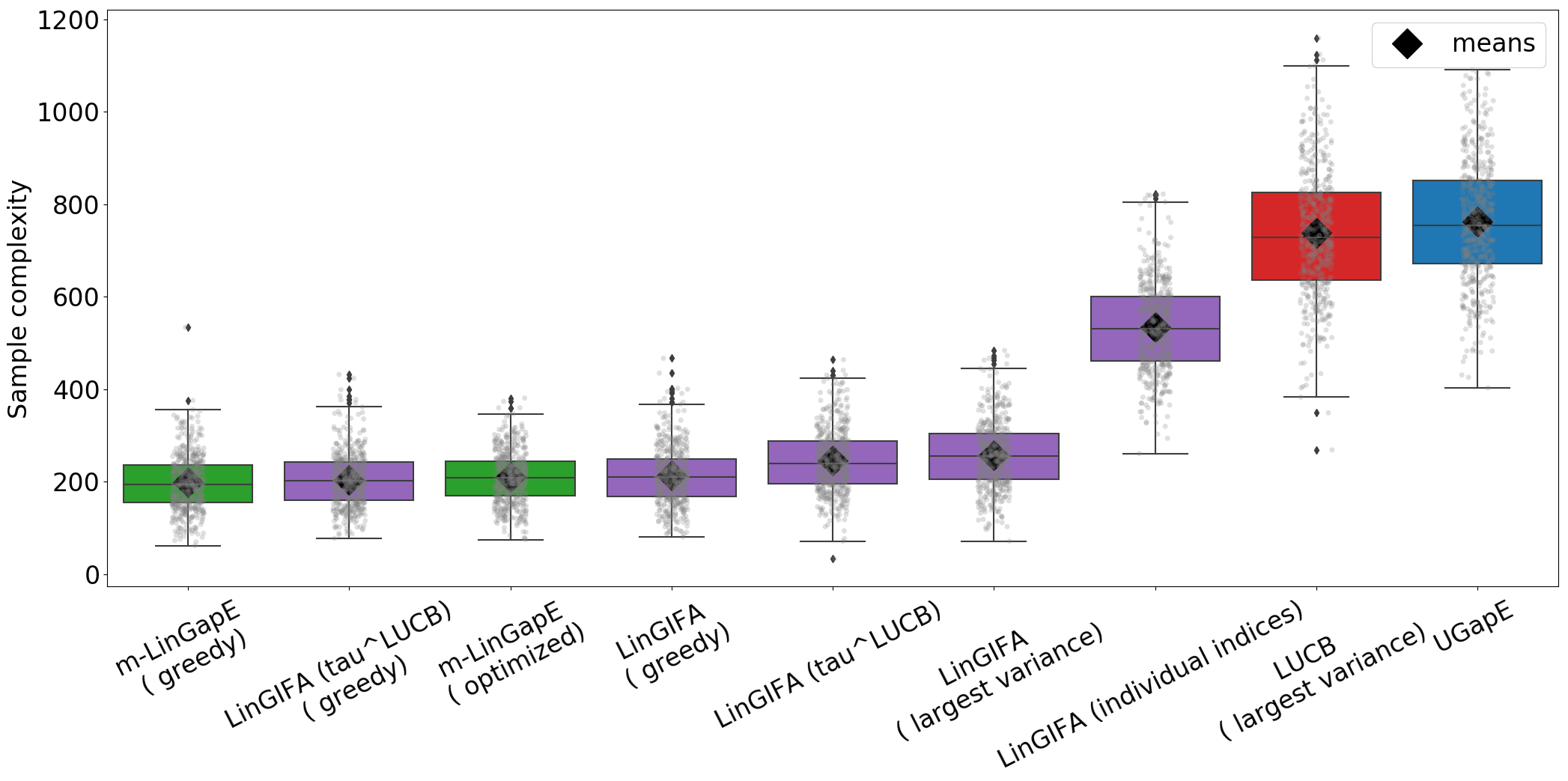}\\
\includegraphics[scale=0.3]{classic_K4_N3_m2_gaussian_pi6.png}
\caption{From top to bottom: classic instances (a) $K=4$, $\omega=\frac{\pi}{3}$, $m=2$~; (b) $K=4$, $\omega=\frac{\pi}{6}$, $m=2$. Error frequencies are rounded up to $5$ decimal places.}
\label{fig:boxplots_supplementary}
\end{figure}

\subsection{Comparing complexity constants}

Below is the table to which we refer to in Section~\ref{sec:discussion_exp}.

\begin{table*}[ht]
    \centering
    \caption{Comparison of complexity constants in $m$-LinGapE and UGapE ($\%$ on $1,000$ random instances).}
    \label{tab:comparison_complexity}
    \begin{tabular}{l|c|c|c|c|c|c|c|c|c|c|c|c|c|c|c}
	\textbf{$D$} & $0.25$ & $0.5$ & $0.25$ & $0.5$ & $0.25$ & $0.5$ & $0.25$ & $0.25$ & $0.5$ & $0.25$ & $0.5$ & $0.25$ & $0.5$ & $0.25$ & $0.5$\\
	\hline
	\textbf{$K$} & $10$ & $10$ & $10$ & $10$ & $10$ & $10$ & $20$ & $20$ & $20$ & $20$ & $20$ & $30$& $30$& $30$& $30$\\
	\hline
	\textbf{$m$} & $4$ & $4$ & $4$ & $4$ & $4$ & $4$ & $7$ & $7$ & $7$ & $7$ & $7$ & $11$& $11$& $11$& $11$\\
	\hline
	\textbf{$N$} & $5$ & $5$ & $10$ & $10$ & $20$ & $20$ & $10$ & $20$ & $20$ & $40$ & $40$ & $15$& $15$& $30$& $30$\\
	\hline
	\textbf{\%} & $29.1\%$ & $30.8\%$ & $0.0\%$ & $0.0\%$ & $0.0\%$ & $0.0\%$ & $0.6\%$ & $0.0\%$ & $0.0\%$ & $0.0\%$ & $0.0\%$ & $0.1\%$& $0.1\%$& $0.0\%$& $0.0\%$\\
    \end{tabular}
\end{table*}

We have tested if, empirically, LinGIFA was more performant than LinGapE on instances where $\HA{\text{$m$-LinGapE(2)}} \leq \HA{\text{UGapE}}$, since LinGIFA has a similar structure as UGapE. We generated a random linear instance, following the procedure described in Section~\ref{sec:discussion_exp} in the paper, with $K=10$, $N=5$, $D=0.5$. For $m=3$, the condition $\HA{\text{$m$-LinGapE(2)}} \leq \HA{\text{UGapE}}$ is satisfied, whereas it is not when $m=8$. We considered Gaussian reward distributions. See Figure~\ref{fig:boxplots_complexity}. From these results, we notice that both algorithms are actually similar in sample complexity in both instances. Hence, even if the condition $\HA{\text{$m$-LinGapE(2)}} \leq \HA{\text{UGapE}}$ is seldom satisfied as seen in Table~\ref{tab:comparison_complexity}, in practice, $m$-LinGapE with the optimized rule is still performant.

\begin{table*}[ht]
    \centering
    \caption{Values of complexity constants in $m$-LinGapE and UGapE on the randomly generated instance.}
    \begin{tabular}{l|c|c}
	& $m=3$ & $m=8$\\
	\hline
	$\HA{\text{$m$-LinGapE(2)}}$ & $4,545.97$ & $32,124.01$\\
	$\HA{\text{UGapE}}$ & $5,047.76$ & $27,622.18$\\
	$\mu_m-\mu_{m+1}$ & $0.075$ & $0.029$\\
	$\HA{\text{$m$-LinGapE(2)}} \leq \HA{\text{UGapE}}?$ & \textbf{True} & \textbf{False}\\
    \end{tabular}
\end{table*}

\begin{figure}[ht]
\includegraphics[scale=0.3]{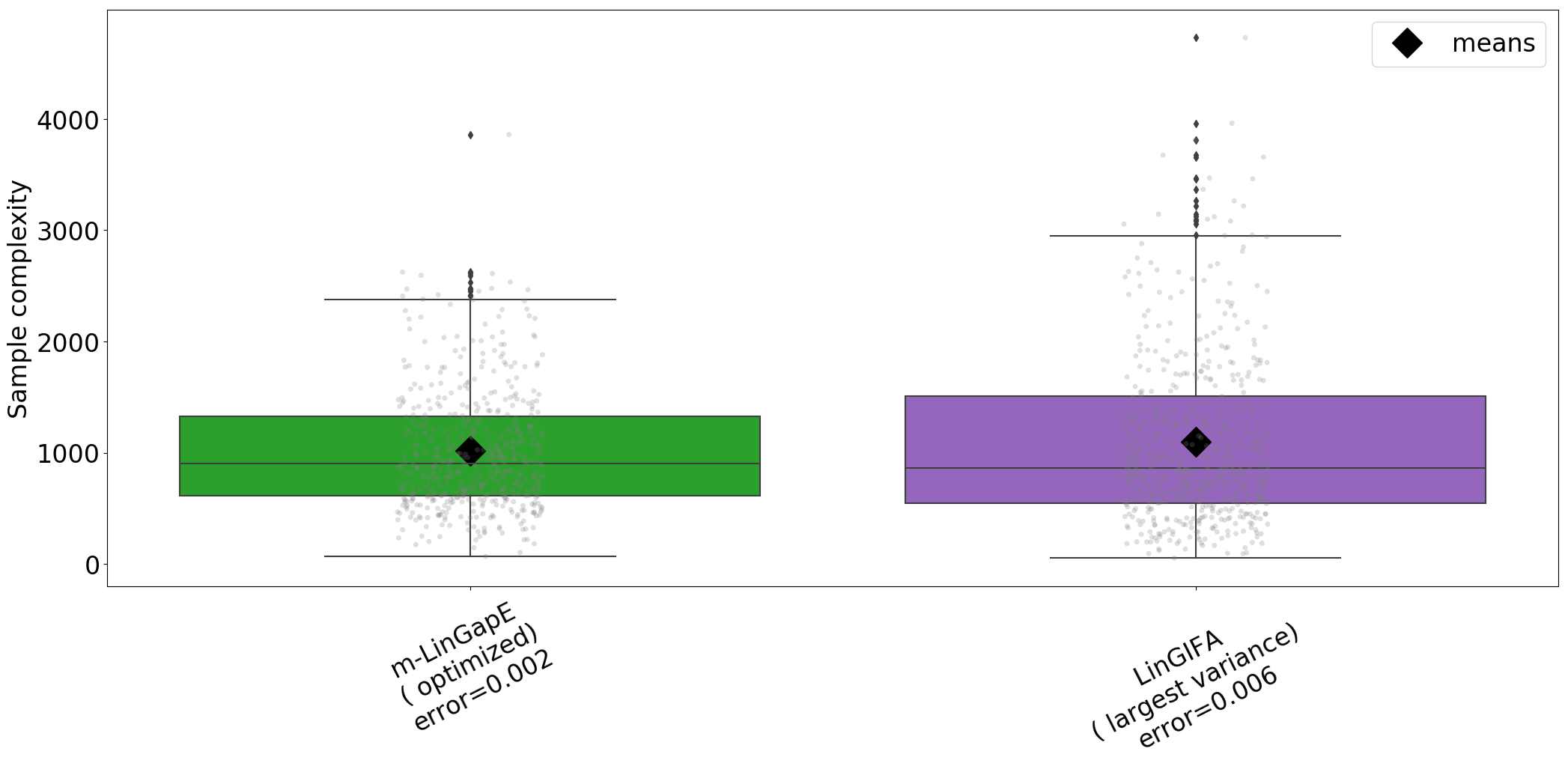}\\
\includegraphics[scale=0.3]{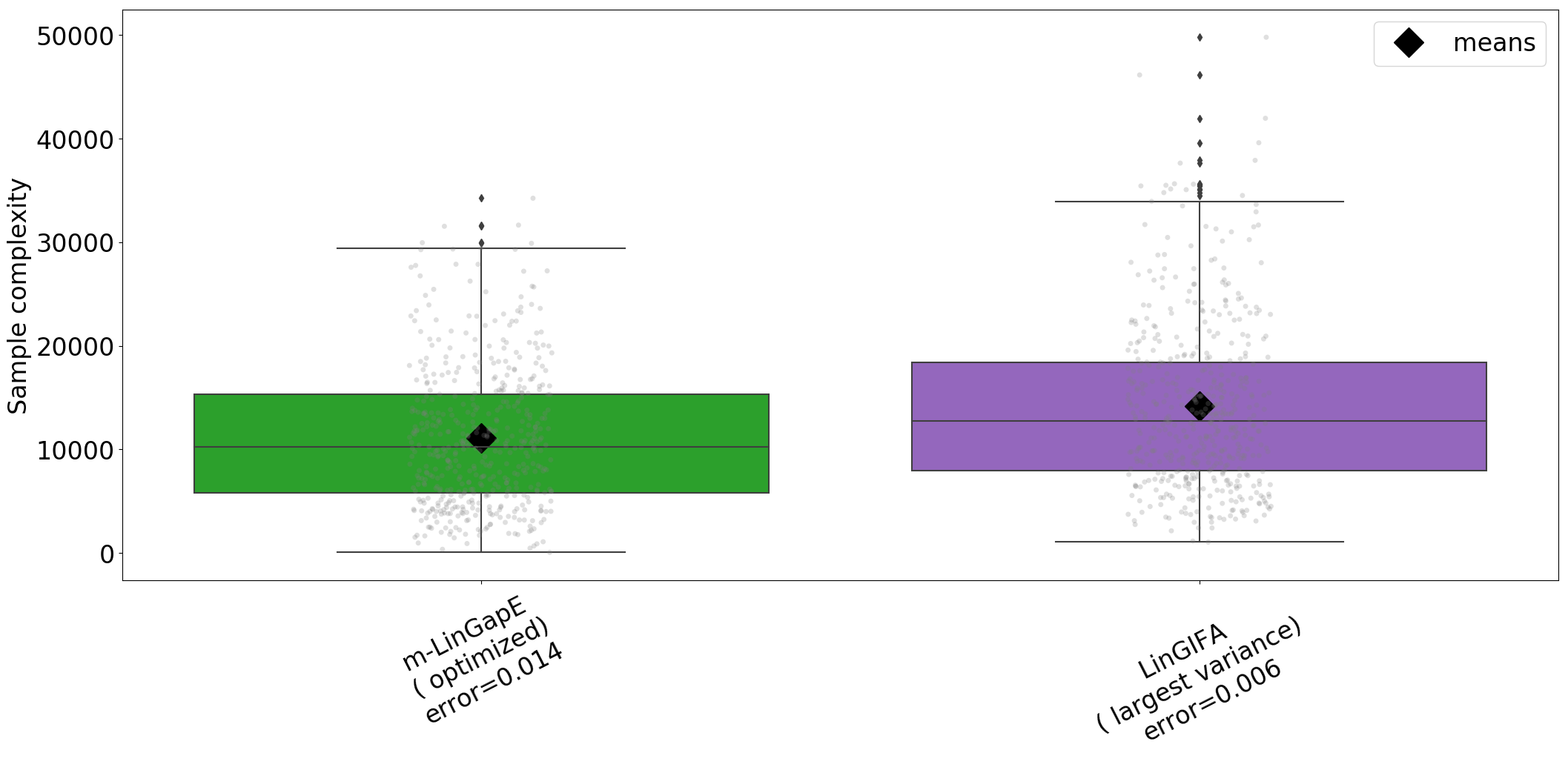}
\caption{From top to bottom: $m=3$, $m=8$. Error frequencies are rounded up to $5$ decimal places.}
\label{fig:boxplots_complexity}
\end{figure}

\section{DRUG REPURPOSING INSTANCE}\label{sec:dr_instance}

Remember that we call ``phenotypes'' gene activity profiles of patients and controls (healthy group) (that is, vectors which represent the genewise activity in a finite set of genes). We focus on a finite set of genes called M$30$, which has been shown to have a global gene activity that is anti-correlated to epileptic gene activity profiles~\citep{delahaye2016rare}. The bandit instance comprises of arms/drugs, which, once pulled, return a single score/reward which quantifies their ability to ``reverse'' a epileptic phenotype --that is, an anti-epileptic treated epileptic phenotype should be closer to a healthy phenotype. A gene regulatory network (GRN) is a summary of gene transcriptomic interactions as a graph: nodes are genes or proteins, (directed) edges are regulatory interactions. We see a GRN as a Boolean network (BN): nodes can have two states ($0$ or $1$), and each of them is assigned a so-called ``regulatory function'', that is, a logical formul\ae ~which updates their state given the states of regulators (i.e., predecessors in the network) at each time step. In order to infer the effect of a treatment, one can set as initial network state (``initial condition'') the patient phenotype masked by the perturbations on the drug targets, and iteratively update the network state until reaching an attractor state (\texttt{phenotype\_prediction} procedure).

\paragraph{Building the Boolean network}

We use the Boolean network inference method described in~\citep{yordanov2016method} using code at repository \texttt{https://github.com/regulomics/expansion-network}~\citep{reda2019automated}. We get the unsigned undirected regulatory interactions from the \emph{protein-protein interaction network} (PPI) of M$30$, using the STRING database~\citep{szklarczyk2016string}. Using expression (or, as we called it in the paper, gene activity) data in the hippocampus from UKBEC data (Gene Expression Omnibus (GEO) accession number $GSE46706$), a Pearson's R correlation matrix is computed, which allows signing the interactions using pairwise correlation signs. Then, considering that the effects of a gene perturbation can only be seen on connected nodes, we only keep strongly connected components in which at least one gene perturbation in LINCS L$1000$ experiments occurs, using Tarjan's algorithm~\citep{tarjan1972depth}.

Then, in order to direct the edges in the network using the inference method, we restrict the experiments extracted from LINCS L1000 to those in SH-SY5Y human neuron cells (neuroblastoma from bone marrow), with a positive interference scale score~\citep{cheng2016systematic}, which quantify the success of the perturbation experiment. For each experiment (knockdown via shRNA, overexpression via cDNA) on a gene in M30 in this cell line, we extract from Level 3 LINCS same-plate untreated, genetic control and perturbed profiles (each of them being real-valued vectors of size $\approx 100$, the number of genes in M$30$) such as the perturbed profile on this plate has the largest value of \textit{distil\_ss} which quantifies experimental replication. This procedure yields a total of ($1$ untreated + $2$ replicates of genetic control + $2$ replicates of perturbed) $\times 3$ experimental profiles. Each experimental constraint for the GRN inference is defined as follows ($G=101$ is the number of $M30$ genes in the network):

\begin{itemize}
\item \textbf{Initial condition}: Untreated profile which has been binarized using the \texttt{binarize\_via\_histogram} procedure (peak detection in histogram of gene expression values using persistent topology). Value: $\{0,1,\bot\}^{G}$. The number of non-$\bot$ values is $66$.
\item \textbf{Perturbation}: Gene-associated value is equal to $1$ if and only if the gene is perturbed in the experiment.
\item \textbf{Final/fixpoint condition}: Vector which has been obtained by running Characteristic Direction~\citep{clark2014characteristic} (CD) on $[\text{treated}||\text{genetic control}]$ (in the call to the function, treated profiles were annotated $2$ whereas genetic control ones were annotated $1$) profiles, which yields a vector in $\{0,1,\bot\}^{G}$, where $0$'s (resp. $1$'s) are significantly down- (resp. up-) regulated genes in treated profiles compared to control ones. Note that $[P_1||P_2]$ means that we compute the genewise activity change from group $P_2$ to group $P_1$ (hence, this is not symmetrical). The number of non-$\bot$ values is around $22$.
\end{itemize}

The inferred GRN should satisfy all experimental constraints by assigning logical functions to genes and selecting gene interactions. This network is displayed in Figure~\ref{fig:inferred_grn}.

\begin{table*}[ht]
\centering
\caption{Experiments in SH-SY5Y human neuron-like cell line for GRN inference. Inference parameter: $t=20$, asynchronous dynamics.}
\label{tab:experiments}
\begin{tabular}{c|c|c}
\textbf{Perturbed gene} & \textbf{Experiment type} & \textbf{Exposure time}\\
\hline
CACNA1C & KD & 120 h\\
CDC42 & KD & 120 h\\
KCNA2 & KD & 120 h\\
\end{tabular}
\end{table*}

\begin{figure}[ht]
\centering
\includegraphics[scale=0.29]{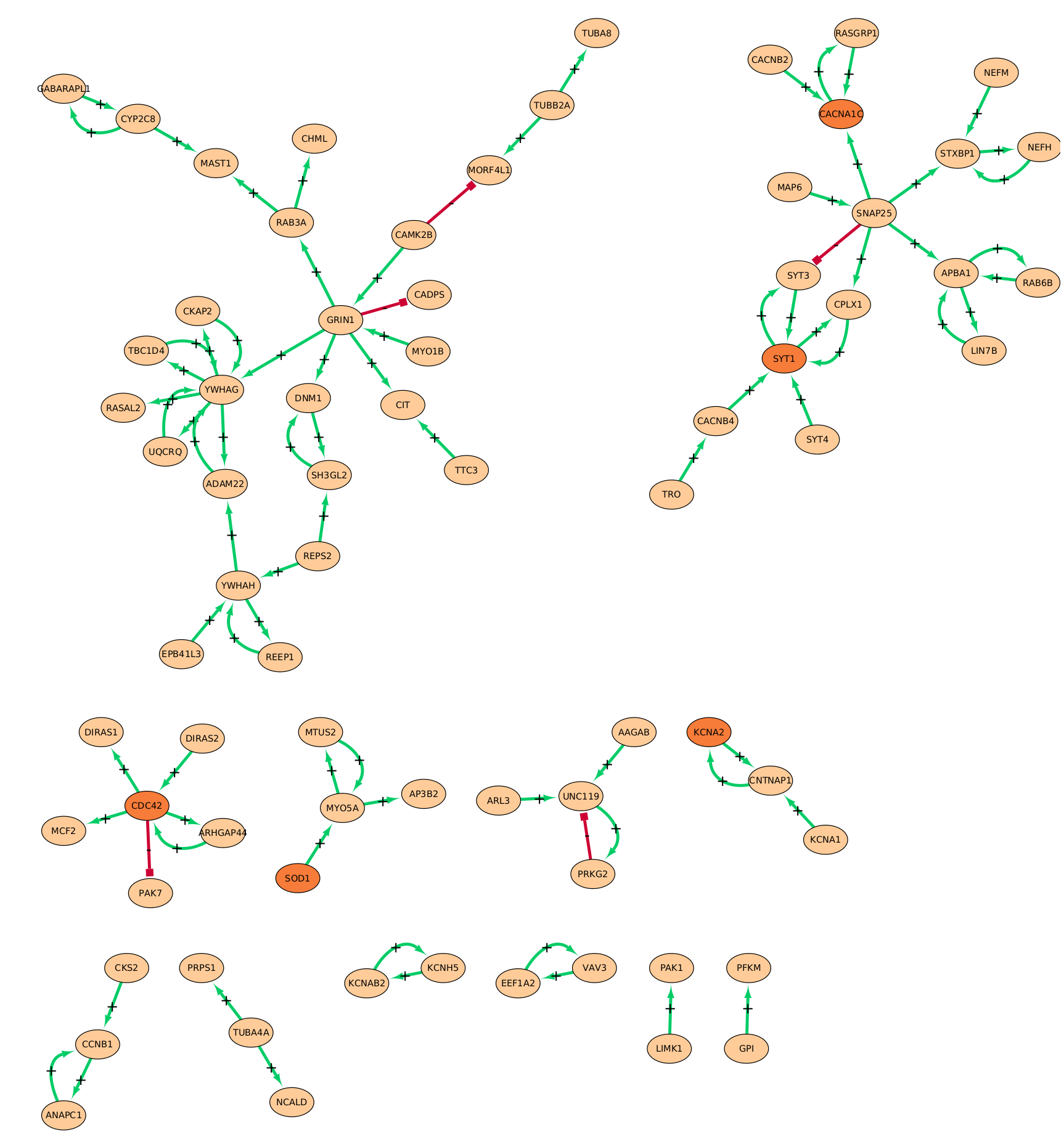}
\caption{Inferred GRN for the drug repurposing instance on epilepsy. Genes present in this network belong to the M$30$ set. Green edges labelled ``+'' (resp. red edges labelled ``-'') are activatory (resp. inhibitory) interactions from one regulatory gene on a target gene, that is, that increase (resp. decrease) target gene activity. Deep orange nodes are perturbed nodes in the SHSY5Y cell line in LINCS L$1000$.}
\label{fig:inferred_grn}
\end{figure}

\paragraph{Getting the arm/drug features}

The features we use are the drug signatures ($K=509$ in the binary drug signature dataset). Given a drug, we compute them as follows:

\begin{enumerate}
	\item First, in the Level $3$ LINCS L1000 database~\citep{subramanian2017next}, we select the cell line with the highest transcriptomic activity score, or TAS (quantifying the success of treatment in this specific cell line: we expect to obtain more reproducible experiments in this cell line if this score is high). Then, considering experiment in this cell line, treatment by the considered drug, we select the \textit{brew\_prefix} (identifier for experimental plate) which correspond to the treated expression profile with the largest value of \textit{distil\_ss} (which quantifies the reproducibility of the profile across replicates), and we get the corresponding same-plate control (vehicle) profile. We also get one same-plate replicate of the considered treated experiment and another of the control experiment (total number of profiles: $2+2$). 
	\item We apply on this set of profiles Characteristic Direction~\citep{clark2014characteristic} in order to get the relative genewise expression change due to treatment from control sample group $CD([\text{treated}||\text{vehicle control}])$. This yields a real-valued vector in $[-1, 1]^{G}$ which will be used in the baseline method L1000 CDS$^2$, and a binary vector in $\{0,1,\bot\}^{G}$ in our scoring method, which is the so-called drug signature.
\end{enumerate}

\paragraph{Epileptic patient/control phenotypes}

We fetched data from GEO accession number $GSE77578$, which was then quantile-normalized across all patient ($|P_p|=18$) and control ($|P_c|=17$) samples. We run Characteristic Direction~\citep{clark2014characteristic} $CD([P_c||P_p])$ in order to get the ``differential phenotype'' from controls to patients, which is the way we chose in order to aggregate control profiles and only considering differentially expressed genes.

\paragraph{Drug ``true'' scores}

We get them from RepoDB~\citep{brown2017standard} database, which is a curated version from \texttt{clinicaltrials.gov}, and from literature. To each drug is associated an integer: $1$ if the drug is antiepileptic, $0$ if it is unknown, $-1$ if it is a proconvulsant drug.
\vspace{-1cm}
\paragraph{Masking procedure $\rtimes$}

We use this (asymmetric) function in order to generate the initial condition from which an attractor state, if it exists, should be fetched: $(x \rtimes y)[j] = y[j]$ if $y[j] \in \{0,1\}$ else $x[j]$. This aims at mimicking the immediate effect of treatment on gene activity.
\vspace{-1.2cm}
\paragraph{Running the simulator \emph{via} the GRN}

Given collected patient and control phenotypes, and seeing arms as potentially repurposed drugs, the procedure to generate a reward from a given arm $a$ is as described in Algorithm~\ref{alg:grn_simulator}. We compare this method to a simpler signature reversion method, used in the web application L1000 CDS$^2$~\citep{duan2016l1000cds}, which is deterministic, and compares directly drug signatures and differential phenotypes. The full procedure is described in Algorithm~\ref{alg:cosine_baseline}. We have tested our method on a subset of drugs with respect to this baseline. The results can be seen in Figure~\ref{fig:test_to_baseline}. 

Note that returning a score for a single drug with our method is usually a matter of a few minutes, but the computation time can drastically increase when considering a higher number of nodes in the Boolean network, so that is why, even if on this instance we could run all computations for each drug and for each initial patient sample for drug repurposing (which is what we do in Figure~\ref{fig:test_to_baseline} anyway in order to check that our method yields correct results with respect to known therapeutic indications), we think this model is interesting to test our bandit algorithms.

Moreover, the linear dependency between features and scores does not hold: indeed, in our subset of $K=10$ arms, computing the least squares estimate of $\theta$ using the mean rewards $\overline{m}$ as true values, and denoting $X$ the concatenation of drug signatures, that is, $\theta = (X^\top X)^{-1}X\overline{m}$, gives a high value of $\|\theta-\overline{m}\| \approx 20.9$. The linear setting in bandits is simply the easiest contextual setting to analyze. Although this non-linearity, along with the fact that the initial condition is randomized, might be the main reason why the empirical sample complexity in our subset instance is a lot higher than $10 \times 18$ for all algorithms, even if linear algorithms are noticeably more performant than classical ones.
\vspace{-1.2cm}
\begin{algorithm}
\begin{algorithmic}
\STATE \textbf{requires} $G$ GRN, phenotypes of patient (diseased) and control (healthy) individuals w.r.t. a given disease $P_p \in [0, 15]^{G}$, $P_c \in [0, 15]^{G}$, $a$ arm/drug to be tested, with binary drug signature $s^b_{a} \in \{0,1,\bot\}^{G}$.
\STATE \# \textbf{differential phenotype is computed: controls$||$patients}
\STATE $D \in \{0,1,\bot\}^{G} = \emph{CD}([P_c || P_p])$
\STATE \# \textbf{patient phenotype is uniformly sampled from the pool of patient phenotypes}
\STATE $p \thicksim \cU(P_p)$
\STATE $p^b \in \{0,1,\bot\}^{G} \leftarrow \texttt{binarize\_via\_histogram}(p)$
\STATE $p^r \in \{0,1\}^{G} \leftarrow$ \texttt{phenotype\_prediction}(GRN=G, initial\_condition=$(p^b \rtimes s^b_{a}) \in \{0,1,\bot\}^{G}$)
\STATE \# \textbf{comparison function \texttt{cosine\_score} is run on the intersection of supports of $D$ and $p^r$}
\STATE \# \textbf{this intersection is equal to $50$ in practice, which is the size of the support of $D$}
\STATE $r \leftarrow \texttt{cosine\_score}_{|D|\cap|p^r|}(D, p^r)$
\STATE \textbf{returns} $r$
\end{algorithmic}
\caption{Reward generation via the ``GRN simulator''.}
\label{alg:grn_simulator}
\end{algorithm}
\vspace{-3cm}
\begin{algorithm}
\begin{algorithmic}
\STATE \textbf{requires} Phenotypes of patient (diseased) and control (healthy) individuals w.r.t. a given disease $P_p \in [0, 15]^{G}$, $P_c \in [0, 15]^{G}$, $a$ arm/drug to be tested, with non-binary, full signature $s_{a}  \in [-1,1]^{G}$.
\STATE \# \textbf{differential phenotype is computed: patients$||$controls}
\STATE $C \in \{0,1,\bot\}^{G} = \emph{CD}([P_p || P_c])$
\STATE $r \leftarrow 1-\texttt{cosine\_score}(C, s_{a})$
\STATE \textbf{returns} $r$
\end{algorithmic}
\caption{Reward via baseline method from L1000 CDS$^2$~\citep{duan2016l1000cds}.}
\label{alg:cosine_baseline}
\end{algorithm}

\begin{figure}[H]
\centering
\includegraphics[scale=0.25]{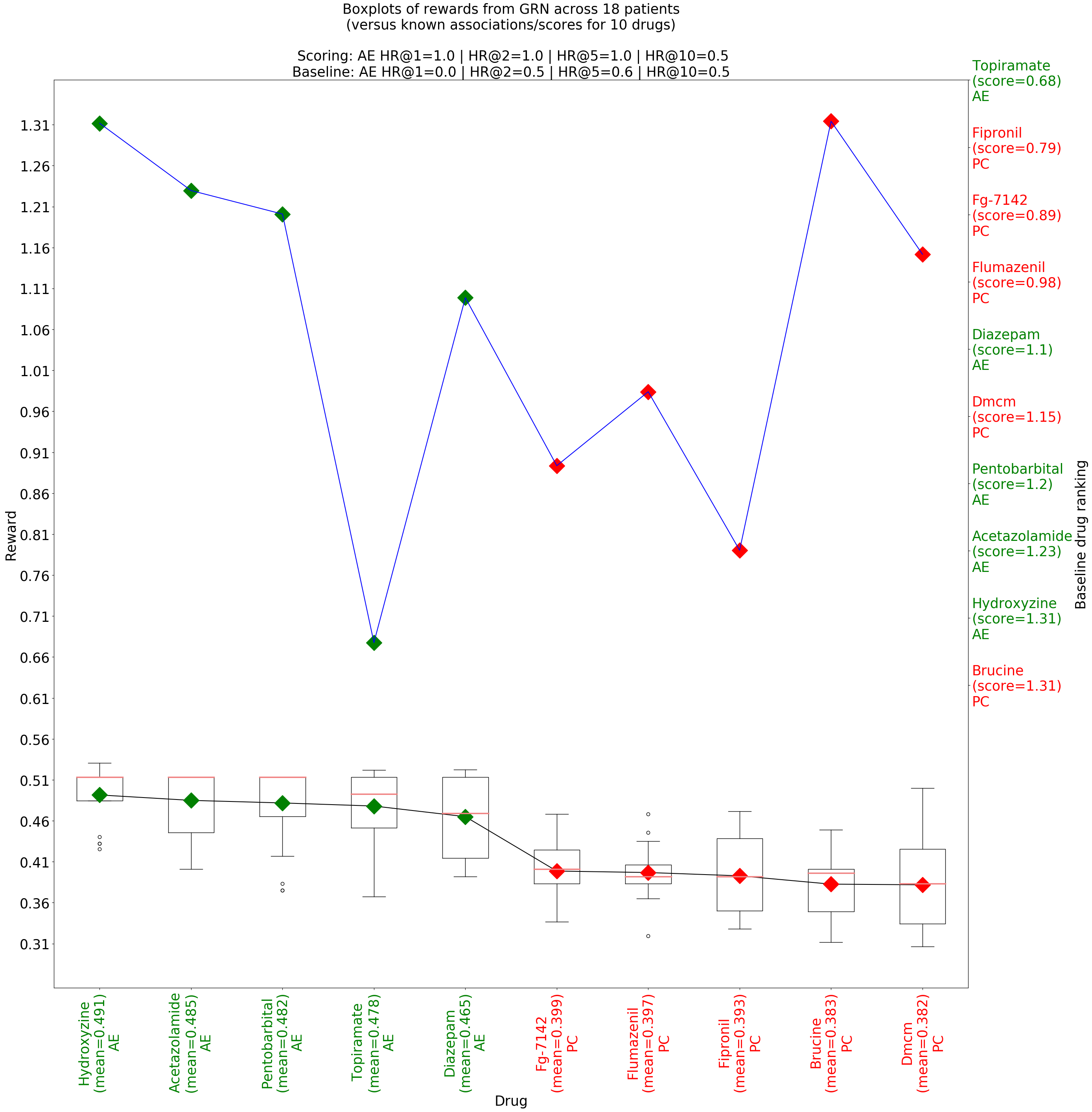}
	\caption{We consider a subset of drugs of size $10$ ($5$ with positive association score, $5$ with negative score), which is the one tested in the paper. For validation, we plot a boxplot of the rewards obtained for each initial patient sample, for each drug. Mean is colored as green if the drug is antiepileptic (AE), resp. red if it is proconvulsant (PC), with the corresponding drug name (in red if its true score is negative, in green if it is positive). The baseline score is plot in in blue. For both methods, the highest the score is, the better (the ``more'' anti-epileptic the drug is predicted). We computed and reported above the plot the Hit Score at rank $r$ (HR$@r$), that is, the mean accuracy on the class AE on the Top-$r$ scores, for $r \in \{1,2,5,10\}$ for each of the methods (Scoring or Baseline).}
\label{fig:test_to_baseline}
\end{figure}

\begin{figure}[ht]
\includegraphics[scale=0.35]{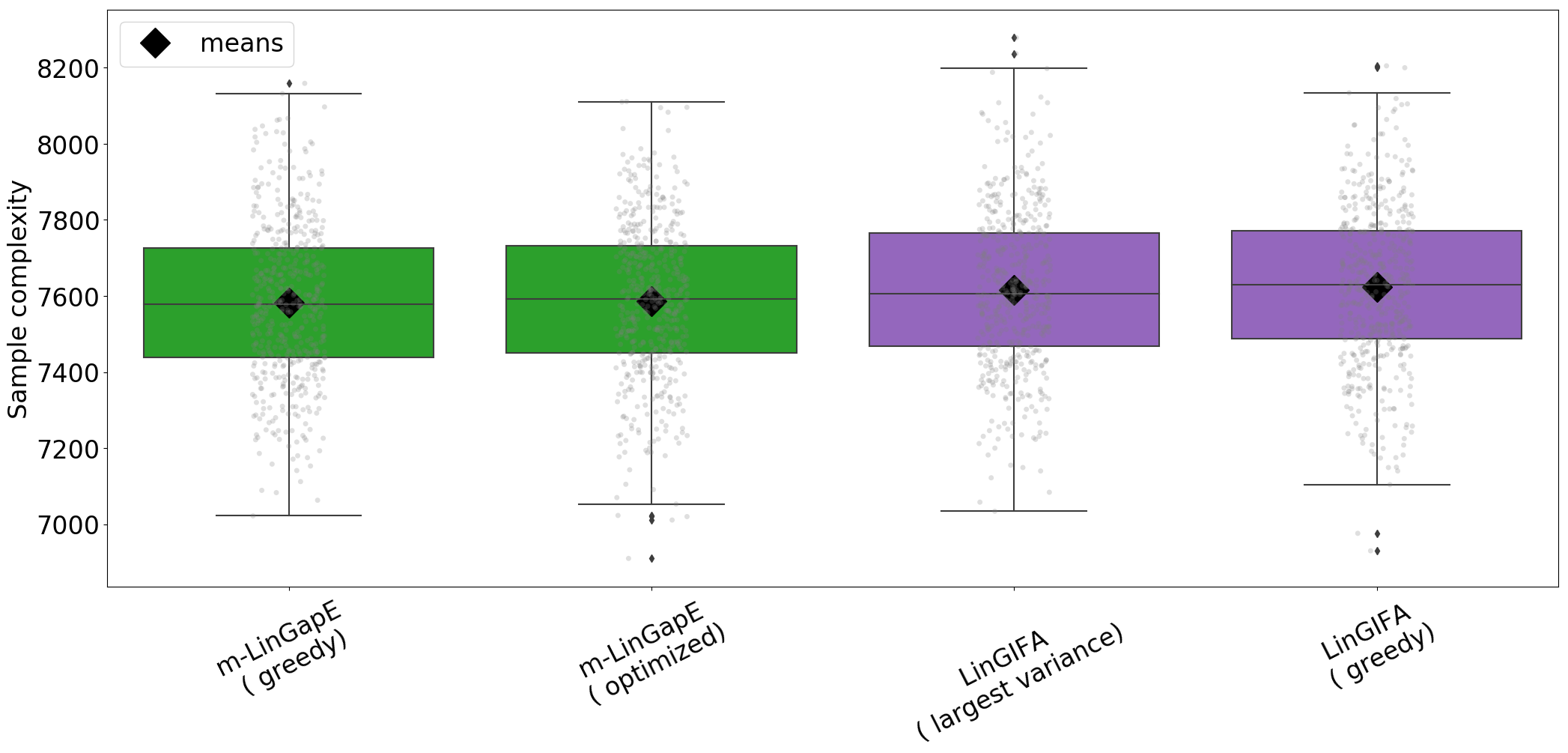}
\caption{Drug repurposing instance $K=10$, $500$ simulations, $m=5$, $\delta=0.05,\epsilon=0,\sigma=0.5,\lambda=\sigma/20$. Close-up from Figure~\ref{fig:boxplots_a} from the paper.}
\label{fig:dr_boxplots_closeup}
\end{figure}

\clearpage

\section{UPPER BOUNDS FOR GIFA ALGORITHMS}~\label{sec:upper_bounds}

\begin{lemma}
	In algorithm $m$-LinGapE, for any selection rule, on event $\cE \triangleq \INTERSEC{t > 0} \INTERSEC{i,j \in \ARMS} \Big(\GAPPAIRED{i}{j} \in [-B_{j,i}(t), B_{i,j}(t)]\Big)$, for all $t > 0$, $B_{c_t, b_t}(t) \leq \min(-({\GAP{{b_t}}} \lor {\GAP{{c_t}}})+2W_t(b_t,c_t), 0)+W_t(b_t,c_t)$. \textnormal{(Lemma~\ref{lemma:m-LinGapE_bound} in the paper)}
\end{lemma}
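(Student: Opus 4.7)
The plan is to reduce the target inequality to two separate bounds using the identity $\min(x,0)+y = \min(x+y,y)$: I will establish (A) $B_{c_t,b_t}(t) \leq W_t(b_t,c_t)$ and (B) $B_{c_t,b_t}(t) \leq -(\Delta_{b_t}\lor\Delta_{c_t}) + 3W_t(b_t,c_t)$, so that taking their minimum is exactly the announced bound. Inequality (A) is immediate from the $m$-LinGapE rule $J(t) = \argmaxmset{i \in \ARMS} \hat\mu_i(t)$: since $b_t \in J(t)$ and $c_t \notin J(t)$, we have $\hat\mu_{b_t}(t) \geq \hat\mu_{c_t}(t)$, hence $\hat\Delta_{c_t,b_t}(t) \leq 0$, so $B_{c_t,b_t}(t) = \hat\Delta_{c_t,b_t}(t) + W_t(b_t,c_t) \leq W_t(b_t,c_t)$.

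For (B), the starting point on the event $\cE$ is the standard confidence bound $B_{c_t,b_t}(t) \leq \mu_{c_t}-\mu_{b_t} + 2W_t(b_t,c_t)$, obtained from $|\hat\Delta_{c_t,b_t}(t) - (\mu_{c_t}-\mu_{b_t})| \leq W_t(b_t,c_t)$. The key insight is two auxiliary estimates showing that any ``misclassified'' ambiguous arm has its true gap controlled by $W_t(b_t,c_t)$. First, if $b_t \in \WORSTM$, then since $|J(t)|=|\TOPM|=m$ there exists $a \in \TOPM \cap J(t)^c$; by the definition of $c_t$ as the maximizer of $B_{\cdot,b_t}(t)$ over $J(t)^c$, $B_{a,b_t}(t) \leq B_{c_t,b_t}(t)$, while on $\cE$, $B_{a,b_t}(t) \geq \mu_a - \mu_{b_t} \geq \mu_m - \mu_{b_t} = \Delta_{b_t}$; combined with (A) this yields $\Delta_{b_t} \leq W_t(b_t,c_t)$. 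Symmetrically, if $c_t \in \TOPM$, there exists $b' \in \WORSTM \cap J(t)$; by the definition of $b_t$ as the maximizer of $\max_{i \notin J(t)} B_{i,\cdot}(t)$ over $J(t)$, $B_{c_t,b'}(t) \leq B_{c_t,b_t}(t)$, and on $\cE$, $B_{c_t,b'}(t) \geq \mu_{c_t}-\mu_{b'} \geq \mu_{c_t}-\mu_{m+1} = \Delta_{c_t}$, giving $\Delta_{c_t} \leq W_t(b_t,c_t)$.

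To conclude (B), I rewrite it as the pair $B_{c_t,b_t}(t) + \Delta_{b_t} \leq 3W_t(b_t,c_t)$ and $B_{c_t,b_t}(t) + \Delta_{c_t} \leq 3W_t(b_t,c_t)$ and check these via case analysis on whether $b_t$ and $c_t$ each lie in $\TOPM$ or $\WORSTM$. In the ``correctly classified'' case $b_t\in\TOPM$, $c_t\in\WORSTM$, the identities $\Delta_{b_t} = \mu_{b_t}-\mu_{m+1}$ and $\Delta_{c_t} = \mu_m-\mu_{c_t}$, together with $\mu_{b_t}\geq\mu_m$ and $\mu_{c_t}\leq\mu_{m+1}$, give $\mu_{c_t}-\mu_{b_t}+\Delta_{b_t} \leq 0$ and $\mu_{c_t}-\mu_{b_t}+\Delta_{c_t} \leq 0$, so both inequalities follow from $B_{c_t,b_t}(t) \leq \mu_{c_t}-\mu_{b_t} + 2W_t(b_t,c_t)$ with slack $2W_t \leq 3W_t$. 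In each of the three other cases, at least one of the two auxiliary estimates applies: after substituting the corresponding $\Delta \leq W_t(b_t,c_t)$ into the same inequality (and using (A) for the other summand), both bounds collapse below $3W_t$. For instance, when $b_t,c_t \in \TOPM$, one gets $B_{c_t,b_t}(t)+\Delta_{b_t} \leq (\mu_{c_t}-\mu_{m+1}) + 2W_t = \Delta_{c_t}+2W_t \leq 3W_t$, while $B_{c_t,b_t}(t)+\Delta_{c_t} \leq W_t + W_t \leq 3W_t$.

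The main obstacle will be the bookkeeping in this four-way case analysis: I must verify \emph{both} inequalities $B_{c_t,b_t}(t) + \Delta_{b_t} \leq 3W_t$ and $B_{c_t,b_t}(t) + \Delta_{c_t} \leq 3W_t$ in every case, not only the one whose true gap is easier to control. This is exactly what forces the two auxiliary estimates to be invoked simultaneously when $b_t,c_t\in\TOPM$ (where the naive bound $\mu_{c_t}-\mu_{b_t}$ need not be negative), and what prevents shrinking the additive slack from $3W_t$ to $2W_t$.
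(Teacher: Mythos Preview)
Your proof is correct and follows essentially the same approach as the paper: both reduce to establishing (A) $B_{c_t,b_t}(t)\leq W_t(b_t,c_t)$ from $b_t\in J(t),\,c_t\notin J(t)$, and (B) $B_{c_t,b_t}(t)\leq -(\Delta_{b_t}\lor\Delta_{c_t})+3W_t(b_t,c_t)$ via a four-way case split on the membership of $b_t,c_t$ in $\TOPM$, using the identity $B_{c_t,b_t}(t)=\max_{j\in J(t)}\max_{i\notin J(t)}B_{i,j}(t)$ to control the gap of a misclassified ambiguous arm. The only difference is organizational: you extract the two ``misclassification'' estimates ($b_t\in\WORSTM\Rightarrow\Delta_{b_t}\leq W_t$ and $c_t\in\TOPM\Rightarrow\Delta_{c_t}\leq W_t$) upfront and reuse them across cases, whereas the paper rederives them inside cases (iii) and (iv).
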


\begin{proof}
Let us use two properties:

\paragraph{1.} As $b_t \in J(t)$ and $c_t \notin J(t)$, it holds in particular that $\EMPMU{b_t}{t} \geq \EMPMU{c_t}{t}$, hence $B_{c_t,b_t}(t) = \EMPGAP{c_t}{b_t}{t} + W_t(b_t,c_t) \leq W_t(b_t,c_t)$.
\paragraph{2.} From the definitions of $b_t \text{ and } c_t$, it holds that $B_{c_t, b_t}(t)  = \max_{j \in J(t)} \max_{i \notin J(t)} B_{i,j}(t)$.

Property $1$ already establishes that $B_{c_t, b_t}(t) \leq W_t(b_t,c_t)$, it therefore remains to show that 
$B_{c_t, b_t}(t)  \leq -({\GAP{{b_t}}} \lor {\GAP{{c_t}}})+3W_t(b_t,c_t)$. We do it by distinguishing four cases:

\paragraph{(i)} \textbf{$b_t \in \TOPM$ and $c_t \notin \TOPM$}: In that case ${\GAP{{b_t}}} = \mu_{b_t} - \mu_{m+1}$ and ${\GAP{{c_t}}} = \mu_{m} - \mu_{c_t}$. As event $\cE$ holds, one has $B_{c_t,b_t}(t) = -B_{b_t,c_t}(t)+2W_t(b_t,c_t) \leq \GAPPAIRED{c_t}{b_t}+2W_t(b_t,c_t)$. As $c_t \notin \TOPM$, $\mu_{c_t} \leq \mu_{m+1}$, and $\GAPPAIRED{c_t}{b_t} \leq  \mu_{m+1} - \mu_{b_t} = -{\GAP{{b_t}}}$. But as $b_t \in \TOPM$, it also holds that $\mu_{b_t} \geq \mu_{m}$, and $\GAPPAIRED{c_t}{b_t} \leq  \mu_{c_t} - \mu_{m} = -{\GAP{{c_t}}}$. Hence $B_{c_t, b_t}(t)  \leq -({\GAP{{b_t}}} \lor {\GAP{{c_t}}})+2W_t(b_t,c_t) \leq -({\GAP{{b_t}}} \lor {\GAP{{c_t}}})+3W_t(b_t,c_t)$.

\paragraph{(ii)} \textbf{$b_t \not\in \TOPM$ and $c_t \in \TOPM$}: Using Property $1$:
    \begin{eqnarray*}
        B_{c_t, b_t}(t) & \leq & W_t(b_t,c_t) \leq \EMPGAP{b_t}{c_t}{t} + W_t(b_t,c_t) = B_{b_t,c_t}(t) = -B_{c_t,b_t}(t)+2W_t(b_t,c_t) \leq \GAPPAIRED{b_t}{c_t} + 2W_t(b_t,c_t)\\
     \end{eqnarray*}
as event $\cE$ holds. One can show with the same arguments as in the previous case that $B_{c_t, b_t}(t) \leq -({\GAP{{b_t}}} \lor {\GAP{{c_t}}})+3W_t(b_t,c_t)$. 

\paragraph{(iii)} \textbf{$b_t \not\in \TOPM$ and $c_t \not\in \TOPM$}: In that case, there must exist $b \in \TOPM$ that belongs to $J(t)^c$. From the definition of $c_t$, it follows that $B_{c_t,b_t}(t) \geq B_{b, b_t}(t)$. Hence, using furthermore Property $1$, the definition of $c_t$, event $\cE$ and $b \in \TOPM$, $W_t(b_t,c_t) \geq B_{c_t, b_t}(t) \geq B_{b,b_t}(t) \geq \GAPPAIRED{b}{b_t} \geq \GAPPAIRED{m}{b_t} = {\GAP{{b_t}}}$. It follows that, using event $\cE$:
    \begin{eqnarray*}
         B_{c_t, b_t}(t) & \leq & \GAPPAIRED{c_t}{b_t} + 2W_t(b_t,c_t) = (\mu_{c_t} - \mu_{m}) + (\mu_{m} - \mu_{b_t}) + 2W_t(b_t,c_t) =  - {\GAP{{c_t}}} + {\GAP{{b_t}}} + 2W_t(b_t,c_t)\\
         & & \mbox{ ($b_t \not\in \TOPM$ and $c_t \not\in \TOPM$)}\\
         & \leq & - {\GAP{{c_t}}} + 3 W_t(b_t,c_t)\\
    \end{eqnarray*}
And it also holds by Property $1$ that:
    \begin{eqnarray*}
        B_{c_t, b_t}(t) & \leq & W_t(b_t,c_t) = - W_t(b_t,c_t) + 2 W_t(b_t,c_t) \\
        & \leq & -{\GAP{{b_t}}} + 2W_t(b_t,c_t) \leq -{\GAP{{b_t}}} + 3 W_t(b_t,c_t)\\
    \end{eqnarray*}
Hence $B_{c_t, b_t}(t) \leq -({\GAP{{b_t}}} \lor {\GAP{{c_t}}}) + 3 W_t(b_t,c_t)$.

\paragraph{(iv)} \textbf{$b_t \in \TOPM$ and $c_t \in \TOPM$}: In that case, there must exist $c \notin \TOPM$ such that $c \in J(t)$. By Property $2$, on event $\cE$ and using $c \in \WORSTM$, we know that $B_{c_t, b_t}(t) = \max_{j \in J(t)} \max_{i \notin J(t)} B_{i,j}(t) \geq \max_{i \notin J(t)} B_{i,c}(t) \geq B_{c_t,c}(t) \geq \mu_{c_t} - \mu_{c} \geq \mu_{c_t} - \mu_{m+1} = {\GAP{{c_t}}}$. Hence, using furthermore Property $1$ yields ${\GAP{{c_t}}} \leq B_{c_t, b_t}(t) \leq W_t(b_t,c_t)$. It follows that, using event $\cE$:
    \begin{eqnarray*}
         B_{c_t, b_t}(t) & \leq & \mu_{c_t} - \mu_{b_t} + 2W_t(b_t,c_t) = \mu_{c_t} - \mu_{m+1} + \mu_{m+1} - \mu_{b_t} + 2W_t(b_t,c_t) \\
         & = & {\GAP{{c_t}}} - {\GAP{{b_t}}} + 2W_t(b_t,c_t) \leq  - {\GAP{{b_t}}} + 3W_t(b_t,c_t)\\
    \end{eqnarray*}
And using again Property $1$, one has:
    \begin{eqnarray*}
       B_{c_t, b_t}(t) & \leq & W_t(b_t,c_t) = - W_t(b_t,c_t) + 2W_t(b_t,c_t) \leq -{\GAP{{c_t}}} + 2W_t(b_t,c_t)\\
        & \leq & -{\GAP{{c_t}}} + 3W_t(b_t,c_t)
    \end{eqnarray*}
Hence $B_{c_t, b_t}(t) \leq -({\GAP{{b_t}}} \lor {\GAP{{c_t}}}) + 3 W_t(b_t,c_t)$, which is what we wanted to show.
\end{proof}

\begin{lemma}\textnormal{\textbf{Upper bound in $m$-LinGapE with either or both $b_t$ and $c_t$ pulled at time $t$ ($m$-LinGapE(1))}}
Maximum number of samplings on event $\cE$ is upper-bound by $\inf_{u \in \bR^{*+}} \{u > 1+\HA{\text{$m$-LinGapE(1)}}C_{\delta,u}^2\}$, where $\HA{\text{$m$-LinGapE(1)}} \triangleq 4\sigma^2\sum_{a \in \ARMS} \max \left( \varepsilon, \frac{\varepsilon+\GAP{a}}{3} \right)^{-2}$.
\end{lemma}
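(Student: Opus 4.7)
The plan is to combine Lemma~\ref{lemma:m-LinGapE_bound} with the $\tauLUCB$ stopping rule used by $m$-LinGapE, transfer the resulting lower bound from $W_t(b_t,c_t)$ to $W_t(a_t)$ where $a_t$ is the sampled arm, convert this into an upper bound on $\NA{a_t}{t}$ via Lemma~\ref{lemma:paired_versus_individual}, and finally invoke Lemma~\ref{lemma:get_upper_bound_sample} to close the loop.

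First, fix any $t < \tau_\delta$. The stopping condition fails, so $B_{c_t,b_t}(t) > \varepsilon$. Plugging this into Lemma~\ref{lemma:m-LinGapE_bound} and splitting on whether the inner $\min$ equals $0$ or $-(\GAP{b_t} \lor \GAP{c_t}) + 2 W_t(b_t,c_t)$ yields $W_t(b_t,c_t) > \max(\varepsilon,\ (\varepsilon + \GAP{b_t} \lor \GAP{c_t})/3)$. Under the largest-variance selection rule, $a_t \in \{b_t,c_t\}$ maximizes $W_t(\cdot)$, and the triangle inequality on $\|\cdot\|_{\HATSIGMA{t}}$ (trivially satisfied by the individual indices as well) gives $W_t(b_t,c_t) \leq W_t(b_t) + W_t(c_t) \leq 2 W_t(a_t)$. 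Since $a_t \in \{b_t,c_t\}$ implies $\GAP{a_t} \leq \GAP{b_t} \lor \GAP{c_t}$, we obtain $W_t(a_t) > \tfrac{1}{2} \max(\varepsilon,\ (\varepsilon + \GAP{a_t})/3)$, and the same inequality is trivially valid in the ``both arms pulled'' variant by taking $a_t$ to be whichever of $b_t, c_t$ has the larger variance term.

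Second, unfolding $W_t(a_t) = C_{\delta,t}\|x_{a_t}\|_{\HATSIGMA{t}}$ with $\HATSIGMA{t} = \sigma^2 \HATB{t}^{-1}$ and applying Lemma~\ref{lemma:paired_versus_individual} with $a = a_t$ and $y = x_{a_t}$ yields $\|x_{a_t}\|_{\HATSIGMA{t}} \leq \sigma \|x_{a_t}\|/\sqrt{\NA{a_t}{t}\|x_{a_t}\|^2 + \lambda} \leq \sigma/\sqrt{\NA{a_t}{t}}$. Combined with the previous step and after squaring, this gives $\NA{a_t}{t} < 4 \sigma^2 C_{\delta,t}^2 \max(\varepsilon,\ (\varepsilon+\GAP{a_t})/3)^{-2} = T^*(a_t,\delta,t)$, which is precisely the content of Lemma~\ref{lemma:get_upper_bound_na_variance}. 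As $C_{\delta,t}$ is nondecreasing in $t$, so is $T^*(\cdot,\delta,\cdot)$, and Lemma~\ref{lemma:get_upper_bound_sample} therefore delivers $\tau_\delta \leq \inf\{u > 0 : u > 1 + \sum_{a=1}^K T^*(a,\delta,u)\}$, which collapses to the announced bound upon recognising $\sum_{a} T^*(a,\delta,u) = \HA{\text{$m$-LinGapE(1)}} \, C_{\delta,u}^2$.

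The main obstacle is the case analysis in the first step: the outer $\min$ with $0$ in Lemma~\ref{lemma:m-LinGapE_bound} conceals two scenarios that must both be shown to give the same $\max$-shaped lower bound on $W_t(b_t,c_t)$. A secondary subtlety is making sure the factor $\tfrac{1}{2}$ coming from the triangle inequality is handled uniformly across paired and individual indices, and that the ``pull both arms'' variant is absorbed into the same $T^*$; once the correct $T^*$ is identified, the remaining steps are mechanical invocations of Lemmas~\ref{lemma:paired_versus_individual} and~\ref{lemma:get_upper_bound_sample}.
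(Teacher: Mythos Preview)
Your proposal is correct and follows essentially the same route as the paper's proof: combine Lemma~\ref{lemma:m-LinGapE_bound} with the failure of the $\tauLUCB$ stopping condition to lower-bound $W_t(b_t,c_t)$, pass to $2W_t(a_t)$ via the triangle inequality and the largest-variance rule, apply Lemma~\ref{lemma:paired_versus_individual} to convert into an upper bound on $\NA{a_t}{t}$, and close with Lemma~\ref{lemma:get_upper_bound_sample}. One minor remark: your phrase ``splitting on whether the inner $\min$ equals\ldots'' is slightly misleading, since $B_{c_t,b_t}(t) \leq \min(A,0)+W_t(b_t,c_t)$ gives both $B_{c_t,b_t}(t)\leq W_t(b_t,c_t)$ and $B_{c_t,b_t}(t)\leq A+W_t(b_t,c_t)$ simultaneously, so no case analysis is needed to obtain the $\max$-shaped lower bound; but your conclusion is correct regardless.
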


\begin{proof}
Combining Lemma~\ref{lemma:m-LinGapE_bound} with stopping rule $\tauLUCB$, at time $t < \tauLUCB$:

\begin{equation*}
	\begin{split}
	\varepsilon \leq B_{c_{t}, b_{t}}(t) & \leq \min(-(\GAP{b_{t}} \lor \GAP{c_{t}})+3W_t(b_{t},c_{t}), W_t(b_{t},c_{t}))\\
	\Leftrightarrow \max \left( \varepsilon, \frac{\varepsilon+\GAP{b_{t}}}{3}, \frac{\varepsilon+\GAP{c_{t}}}{3} \right)  & \leq W_t(b_{t},c_{t}) \leq W_t(b_{t})+W_t(c_{t}) \leq 2W_t(a_{t}) = 2C_{\delta,t}||x_{a_t}||_{\HATSIGMA{t}}\\
	& \mbox{(where $a_{t} = \max_{a \in \{b_t,c_t\}} W_t(a)$)}\\
	& = 2\sigma C_{\delta,t}||x_{a_t}||_{\INVHATB{t}} \leq 2\sigma C_{\delta,t}\frac{||x_{a_t}||}{\sqrt{\NA{a_t}{t}||x_{a_t}||}}\\
	& = 2\sigma C_{\delta,t}\frac{1}{\sqrt{\NA{a_t}{t}}} \mbox{ (using Lemma~\ref{lemma:paired_versus_individual} and $\lambda > 0, \NA{a_t}{t} > 0$, since $a_t$ is pulled at $t$)}\\
		\Leftrightarrow \NA{a_t}{t} &  \leq \frac{4\sigma^2 C_{\delta,t}^2}{\max \left( \varepsilon, \frac{\varepsilon+\GAP{b_{t}}}{3}, \frac{\varepsilon+\GAP{c_{t}}}{3} \right)^2} \leq \min_{a \in \{b_t,c_t\}} \frac{4\sigma^2 C_{\delta,t}^2}{\max \left( \varepsilon, \frac{\varepsilon+\GAP{a}}{3}\right)^2} \leq \frac{4\sigma^2 C_{\delta,t}^2}{\max \left( \varepsilon, \frac{\varepsilon+\GAP{a_{t}}}{3}\right)^2}\\
	\Leftrightarrow \NA{a_t}{t} &  \leq \frac{4\sigma^2 C_{\delta,t}^2}{\max \left( \varepsilon, \frac{\varepsilon+\GAP{a_{t}}}{3}\right)^2} = T^{*}(a_t, \delta, t)\\
	\end{split}
\end{equation*}

Using Lemma~\ref{lemma:get_upper_bound_sample}, if $T(\mu, \delta)$ is the number of samplings of $m$-LinGapE on bandit instance $\mu$ for $\delta$-fixed confidence Top-$m$ identification:

\begin{equation*}
	\begin{split}
	T(\mu, \delta) & \leq \inf_{u \in \bR^{*+}} \left\{ u > 1+C_{\delta,u}^2\sum_{a \in \ARMS}\frac{4\sigma^2}{\max \left( \varepsilon, \frac{\varepsilon+\GAP{a}}{3}\right)^2}\right\} \leq \inf_{u \in \bR^{*+}} \{ u > 1+C_{\delta,u}^2\HA{\text{$m$-LinGapE(1)}}\}\\
	\end{split}
\end{equation*}
\end{proof}

\section{TECHNICAL LEMMAS}\label{app:technical}

\begin{lemma}\label{lemma:upper_bound_gap}
Let us fix $K > m > 0$, $t > 0$ and $i \in \ARMS$. Let us consider $\mu$ such that $\mu_1 \geq \mu_2 \geq \dots \geq \mu_K$, and a series of distinct values $(B_{j,i}(t))_{j \in \ARMS}$ such that $B_{j,i}(t) \geq \mu_j-\mu_i$ for any $j \in \ARMS$. Then $\maxm{j \in \ARMS} B_{j,i}(t) \geq \mu_m-\mu_i$.
\end{lemma}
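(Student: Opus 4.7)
The plan is to give a short counting argument. The operator $\maxm{}$ returns the $m$-th largest of the quantities $(B_{j,i}(t))_{j \in \ARMS}$, so it suffices to exhibit $m$ indices $j$ for which $B_{j,i}(t)$ is already at least $\mu_m - \mu_i$; the $m$-th largest value must then be at least that.

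First I would fix the natural candidate set of $m$ indices: $\TOPM = [m]$. For any $j \in [m]$, the ordering $\mu_1 \geq \mu_2 \geq \dots \geq \mu_K$ gives $\mu_j \geq \mu_m$, hence $\mu_j - \mu_i \geq \mu_m - \mu_i$. Combined with the assumption $B_{j,i}(t) \geq \mu_j - \mu_i$, this yields
\[B_{j,i}(t) \geq \mu_m - \mu_i \quad \text{for every } j \in [m].\]
So the set $\{j \in \ARMS : B_{j,i}(t) \geq \mu_m - \mu_i\}$ contains $[m]$ and therefore has cardinality at least $m$.

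To conclude, I would just unfold the definition of $\maxm{}$: if we let $v_{(1)} \geq v_{(2)} \geq \dots \geq v_{(K)}$ denote the decreasing reordering of $(B_{j,i}(t))_{j \in \ARMS}$, then by definition $\maxm{j \in \ARMS} B_{j,i}(t) = v_{(m)}$. Having $m$ distinct values (distinctness of the $B_{j,i}(t)$ is given in the statement) all lying above $\mu_m - \mu_i$ forces $v_{(m)} \geq \mu_m - \mu_i$, which is the claim.

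There is essentially no obstacle here: the only subtlety is to be explicit that $\maxm{}$ denotes the $m$-th greatest (not the maximum over a set of size $m$) and to use the monotonicity of $j \mapsto \mu_j$ on $[m]$ to transfer the assumed per-arm bound $B_{j,i}(t) \geq \mu_j - \mu_i$ into the uniform bound $B_{j,i}(t) \geq \mu_m - \mu_i$ on the top-$m$ indices. Distinctness of the $B_{j,i}(t)$ is a harmless extra hypothesis that keeps the counting argument unambiguous but is not really needed.
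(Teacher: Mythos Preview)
Your proof is correct and follows essentially the same counting argument as the paper: both identify the $m$ indices $j\in[m]$ for which $B_{j,i}(t)\geq \mu_j-\mu_i\geq \mu_m-\mu_i$ and conclude that the $m$-th largest value cannot fall below $\mu_m-\mu_i$. The only cosmetic difference is that the paper phrases this by contradiction (assuming $\maxm{j}B_{j,i}(t)<\mu_m-\mu_i$ and deriving $m$ distinct values strictly above it), whereas you argue directly.
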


\begin{proof}
Assume by appealing to the extremes that $\maxm{j \in \ARMS} B_{j,i}(t) < \mu_{m}-\mu_i$. Then, using our assumption on $(B_{j,i}(t))_{j \in \ARMS}$ and $(\mu_j)_{j \in [K]}$, for any $j \in [m]$, $B_{j,i}(t) \geq \mu_j-\mu_i \geq \mu_m-\mu_i > \maxm{j \in \ARMS} B_{j,i}(t)$, which means at least $m$ distinct values of $(B_{j,i}(t))_{j \leq K}$ are strictly greater than $\maxm{j \in \ARMS} B_{j,i}(t)$, which yields a contradiction. Thus $\maxm{j \in \ARMS} B_{j,i}(t) \geq \mu_m-\mu_i$. Note that we can assume the condition on $(B_{i,j}(t))_{i,j \in \ARMS}$ being distinct is satisfied except for some degenerate cases where two arm features are equal and the observations made from both arms are exactly the same. 
\end{proof}

\begin{lemma}For all $t > 0$, for any subset $J \subseteq [K]$ of size $m$, for all $j\in J$, $\maxm{i\neq j} B_{i,j}(t) \leq \max_{i \notin J} B_{i,j}(t)$. \textnormal{(Lemma~\ref{lem:counting} in the paper)}
\end{lemma}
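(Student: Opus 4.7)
The plan is to argue by a direct counting/pigeonhole argument. Fix $t > 0$, a subset $J \subseteq [K]$ of size $m$, and $j \in J$. The index set $\{i \in [K] : i \neq j\}$ has cardinality $K-1$, and it decomposes as the disjoint union $(J \setminus \{j\}) \sqcup J^c$, of sizes $m-1$ and $K-m$ respectively.

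The key observation is that $\maxm{i \neq j} B_{i,j}(t)$, by definition, is the $m$-th largest value of $B_{i,j}(t)$ when $i$ ranges over the $K-1$ indices different from $j$. I would proceed by contradiction: suppose $\maxm{i \neq j} B_{i,j}(t) > \max_{i \notin J} B_{i,j}(t)$. Then every one of the $m$ largest values among $\{B_{i,j}(t) : i \neq j\}$ is strictly greater than every value $B_{i,j}(t)$ with $i \notin J$. Consequently, these $m$ top values must all be achieved by indices in $J \setminus \{j\}$. But $|J \setminus \{j\}| = m - 1 < m$, which is the contradiction.

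There is essentially no obstacle here — the lemma is a purely combinatorial observation, and the only subtlety to watch for is whether the tie-breaking convention matters. Since the definition of $\maxm{}$ returns the $m$-th largest value (ties broken arbitrarily, as stated in the paper), the counting argument above goes through unchanged regardless of how ties are broken, because the conclusion is a non-strict inequality. Thus the proof reduces to a single sentence once the decomposition of $\{i \neq j\}$ into $(J \setminus \{j\}) \sqcup J^c$ is made explicit.
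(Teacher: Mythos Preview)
Your argument is correct. The paper's proof reaches the same conclusion by a slightly different, contradiction-free route: it rewrites the $m$-th largest value variationally as
\[
\maxm{i \neq j} B_{i,j}(t) \;=\; \min_{\substack{S \subseteq [K]\\ |S| = m-1}} \ \max_{i \notin (S \cup \{j\})} B_{i,j}(t),
\]
and then simply plugs in the particular choice $S = J \setminus \{j\}$, which has size $m-1$ since $j \in J$, giving $\max_{i \notin J} B_{i,j}(t)$ as an upper bound. Your pigeonhole-by-contradiction argument and the paper's variational identity are two faces of the same counting observation; the paper's version is marginally more direct (no contradiction step), while yours makes the underlying pigeonhole intuition more explicit. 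Both handle ties without issue because the target inequality is non-strict.
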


\begin{proof}
	Indeed, $\maxm{i \neq j} B_{i,j}(t)  = \min_{S \subseteq [K], |S| = m-1} \max_{i \not\in (S\CUP\{j\})} B_{i,j}(t)$ (set $S$ matching the outer bound is $\underset{i \neq j}{\overset{[m-1]}{\argmax}}~B_{i,j}(t)$, meaning that we consider then the maximum value over the set of $(B_{i,j}(t))_{i \in \ARMS}$ from which the $m-1$ largest values and $B_{j,j}(t)$ are removed). Then consider $S = J\setminus\{j\}$, which is included in $[K]$ and is of size $m-1$ ($j \in J$). Then $\maxm{i \neq j} B_{i,j}(t) \leq \max_{i \notin S\cup\{j\}} B_{i,j}(t) = \max_{i \notin J} B_{i,j}(t)$.
\end{proof}

\begin{lemma}\label{lemma:majoration_gap}
	For any $t > 0$, for any $a \in \ARMS$ such that $\NA{a}{t} > 0$, for all $x \in \bR^N$, $||x||^2_{\INVHATB{t}} \leq x^\top (\lambda I_N + \NA{a}{t}x_ax_a^\top )^{-1}x$.
\end{lemma}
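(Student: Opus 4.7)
The plan is short and relies on the standard antitonicity of matrix inversion on the positive definite cone. First I would rewrite the design matrix as
\[
\HATB{t} \;=\; \left(\lambda I_N + \NA{a}{t}\, x_a x_a^\top\right) \;+\; \sum_{a' \neq a} \NA{a'}{t}\, x_{a'} x_{a'}^\top.
\]
Both terms are symmetric, the first is positive definite (since $\lambda > 0$), and the second is positive semi-definite as a sum of rank-one PSD matrices $\NA{a'}{t}\, x_{a'} x_{a'}^\top$ (with $\NA{a'}{t} \geq 0$).

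Next I would invoke the standard Loewner monotonicity fact: if $A \succeq B \succ 0$ then $A^{-1} \preceq B^{-1}$. Applying this with $A = \HATB{t}$ and $B = \lambda I_N + \NA{a}{t}\, x_a x_a^\top$, both of which are positive definite, yields
\[
\INVHATB{t} \;\preceq\; \left(\lambda I_N + \NA{a}{t}\, x_a x_a^\top\right)^{-1}.
\]
Evaluating both sides as quadratic forms at $x \in \bR^N$ gives exactly
\[
\|x\|^2_{\INVHATB{t}} \;=\; x^\top \INVHATB{t}\, x \;\leq\; x^\top \left(\lambda I_N + \NA{a}{t}\, x_a x_a^\top\right)^{-1} x,
\]
which is the claim. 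There is no real obstacle here; the only point worth a line of justification in the final write-up is the antitonicity of the inverse on the PD cone, which is a classical fact (e.g.\ from the identity $A^{-1} - B^{-1} = A^{-1}(B-A)B^{-1}$ combined with a congruence argument, or directly from the simultaneous diagonalization of $A$ and $B$).
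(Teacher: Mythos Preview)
Your argument is correct. The decomposition
\[
\HATB{t} = \bigl(\lambda I_N + \NA{a}{t}\, x_a x_a^\top\bigr) + \sum_{a'\neq a} \NA{a'}{t}\, x_{a'} x_{a'}^\top
\]
indeed exhibits $\HATB{t}$ as a positive definite matrix plus a positive semi-definite one, and invoking the antitonicity of inversion on the PD cone ($A \succeq B \succ 0 \Rightarrow A^{-1} \preceq B^{-1}$) immediately yields the desired inequality on quadratic forms. Nothing is missing.

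The paper takes a more hands-on route: it sets $A_t = \lambda I_N + \NA{a}{t} x_a x_a^\top$, writes $\HATB{t} = A_t + \sum_{i=1}^{K-1} \NA{a_i}{t} x_{a_i} x_{a_i}^\top$, and proves $\|x\|^2_{(\HATB{t})^{-1}} \leq \|x\|^2_{A_t^{-1}}$ by induction on $K$, peeling off one rank-one term at a time with the Sherman--Morrison formula. At each step the correction term $\frac{(M^{-1}x)^\top (n\, yy^\top)(M^{-1}x)}{1+n\|y\|^2_{M^{-1}}}$ is shown to be nonnegative, which decreases the quadratic form. In effect the paper is re-deriving, from scratch and for this specific structure, the very Loewner antitonicity fact you cite. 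Your proof is therefore shorter and conceptually cleaner, relying on a standard linear-algebra lemma; the paper's proof is more self-contained and avoids quoting that lemma, at the cost of an induction and explicit Sherman--Morrison computations. Both reach the same conclusion with no gap.
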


\begin{proof}
Let us prove this lemma by induction on $K \geq 2$ (case $K=1$ is trivial). Let $A_t(a) \triangleq \NA{a}{t}x_ax_a^\top $, $A_t \triangleq \lambda I_N + A_t(a)$. For $\ARMS = \{a_1, a_2, \dots, a_{K-1}, a\}$ and $K \geq 2$, let us denote $B^t_{K} \triangleq \sum_{i=1}^{K-1} A_t(a_i)$, such that $\HATB{t} = A_t + B^t_K$. We will prove a stronger claim, which is ``for any $t \in \mathbb{N}^*$, for any $x \in \bR^N$, and $K \geq 2$, $A_t$ and $A_t + B^t_{K}$ are invertible and $||x||^2_{(A_t + B^t_K)^{-1}} < ||x||^2_{A_t^{-1}}$''. Note that, for any $K$ and $t$, since $\lambda > 0$, $A_t$ is then a Gram matrix with linearly independent columns, thus is positive definite, and $B^t_K$ is a Gram matrix, thus a non-negative definite matrix. Then $A_t + B^t_{K}$ and $A_t$ are positive definite and invertible.

\underline{\textbf{If $K = 2$:}} then let us assume that $\ARMS = \{ a, a_1 \}$:

\begin{equation*}
    \begin{split}
        ||x||^2_{(A_t + B^t_2)^{-1}} \triangleq x^\top (A_t + B^t_2)^{-1}x & = x^\top (A_t + \NA{{a_1}}{t}x_{a_1}x_{a_1}^\top )^{-1}x \mbox{ (using Sherman-Morrison formula)}\\
        & = x^\top (A_t^{-1} - \frac{A_t^{-1}\NA{a_1}{t}x_{a_1}x_{a_1}^\top A_t^{-1}}{1+\NA{{a_1}}{t}||x_{a_1}||^2_{A_t^{-1}}})x = ||x||^2_{A_t^{-1}} - \frac{(A_t^{-1}x)^\top B^t_2(A_t^{-1}x)}{1+\NA{a_1}{t}||x_{a_1}||^2_{A_t^{-1}}} \leq ||x||^2_{A_t^{-1}}-0\\
    \end{split}
\end{equation*}

using the fact that $B^t_2$ is nonnegative definite and $A_t$, and then $A_t^{-1}$, are both symmetric.

\underline{\textbf{If $K > 2$:}} using the induction, $A_t + B^t_{K-1}$ is invertible. Similarly to the previous step, using the Sherman-Morrison formula:
    
\begin{equation*}
    \begin{split}
        ||x||^2_{(A_t + B^t_K)^{-1}} &  = x^\top (A_t + B^t_{K-1})^{-1}x - \frac{x^\top (A_t + B^t_{K-1})^{-1}A_t(a_{K-1})(A_t + B^t_{K-1})^{-1}x}{1+\NA{{a_{K-1}}}{t}||x_{a_{K-1}}||^2_{(A_t + B^t_{K-1})^{-1}}}\\
	&  = x^\top (A_t + B^t_{K-1})^{-1}x - \frac{((A_t + B^t_{K-1})^{-1}x)^\top A_t(a_{K-1}) ((A_t + B^t_{K-1})^{-1}x)}{1+\NA{{a_{K-1}}}{t}||x_{a_{K-1}}||^2_{(A_t + B^t_{K-1})^{-1}}}\\
        & \mbox{(same argument as previously, since $A_t(a_{K-1})$ is a Gram matrix)}\\
        & \leq x^\top (A_t + B^t_{K-1})^{-1}x - 0 = ||x||^2_{(A_t + B^t_{K-1})^{-1}}\\
    \end{split}
\end{equation*}
        
Then, using the induction, $||x||^2_{\INVHATB{t}} = ||x||^2_{(A_t + B^t_K)^{-1}} \leq ||x||^2_{(A_t + B^t_{K-1})^{-1}} \leq ||x||^2_{A_t^{-1}} = ||x||^2_{(\lambda I_N + \NA{a}{t}x_ax_a^T)^{-1}}$.

\end{proof}

\begin{lemma}
	$\forall t > 0, \forall a \in \ARMS, \forall y \in \bR^N$, $||y||_{\INVHATB{t}} \leq ||y||/\sqrt{\NA{a}{t}||x_a||^2+\lambda}.$ \textnormal{(Lemma~\ref{lemma:paired_versus_individual} in the paper)}
\end{lemma}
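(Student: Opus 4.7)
My plan is to reduce the statement to a purely algebraic inequality about a rank-one perturbation of $\lambda I_N$, and then confront what breaks down. First I would invoke Lemma~\ref{lemma:majoration_gap} to replace $\HATB{t}$ by the explicit matrix $M \triangleq \lambda I_N + \NA{a}{t}\, x_a x_a^\top$, giving $\|y\|_{\INVHATB{t}}^2 \leq y^\top M^{-1} y$ for every $y \in \bR^N$. I would then apply the Sherman--Morrison identity to get $M^{-1} = \lambda^{-1} I_N - \NA{a}{t}\, x_a x_a^\top/[\lambda(\lambda + \NA{a}{t}\|x_a\|^2)]$, so that $y^\top M^{-1} y = \|y\|^2/\lambda - \NA{a}{t}(x_a^\top y)^2/[\lambda(\lambda+\NA{a}{t}\|x_a\|^2)]$.

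Comparing this closed form to the target $\|y\|^2/(\lambda + \NA{a}{t}\|x_a\|^2)$ and clearing the positive denominators, the claim reduces to the inequality $\NA{a}{t}\|x_a\|^2 \|y\|^2 \leq \NA{a}{t}(x_a^\top y)^2$. This is the \emph{reverse} of the Cauchy--Schwarz inequality, which is the main obstacle: equality holds precisely when $y$ is colinear with $x_a$, and otherwise the inequality fails strictly. A concrete counter-example confirms this is not a gap in the argument but a genuine obstruction: take $N=2$, $\lambda=1$, a single arm with $x_a = e_1$ and $\NA{a}{t} = 3$, so $\HATB{t} = \mathrm{diag}(4,1)$; then $y = e_2$ gives $\|y\|^2_{\INVHATB{t}} = 1$ whereas $\|y\|^2/(\NA{a}{t}\|x_a\|^2 + \lambda) = 1/4$. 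Structurally, $\HATB{t}$ retains $\lambda$ as its smallest eigenvalue along any direction orthogonal to $\mathrm{span}\{x_b : \NA{b}{t} > 0\}$, so no bound of the stated form can hold uniformly in $y \in \bR^N$ as soon as $N$ exceeds the rank of $\sum_b \NA{b}{t} x_b x_b^\top$.

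Consequently, I do not see a route to the lemma exactly as worded. What the Sherman--Morrison computation \emph{does} establish is the restricted statement $\|x_a\|_{\INVHATB{t}} \leq \|x_a\|/\sqrt{\NA{a}{t}\|x_a\|^2 + \lambda}$, since at $y = x_a$ (and more generally for any $y$ colinear with $x_a$) the reduced inequality collapses to the equality $\|x_a\|^4 = (x_a^\top x_a)^2$, and the chain of bounds above becomes tight. This restricted form is exactly what is used downstream in the paper --- the lemma is invoked with $y = x_{a_t}$ and $a = a_t$ in the $m$-LinGapE sample-complexity analysis --- so the remaining arguments go through once the lemma is reinterpreted as a bound on $\|x_a\|_{\INVHATB{t}}$ rather than on $\|y\|_{\INVHATB{t}}$ for arbitrary $y$.
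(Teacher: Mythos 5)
Your analysis is correct, and the problem it identifies lies in the paper, not in your argument. The paper's own proof takes exactly the route you describe: it invokes Lemma~\ref{lemma:majoration_gap} to reduce to $M = \lambda I_N + \NA{a}{t}x_ax_a^\top$, applies Sherman--Morrison to get
\[
y^\top M^{-1}y \;=\; \frac{\|y\|^2}{\lambda} - \frac{\lambda^{-2}\NA{a}{t}\langle y,x_a\rangle^2}{1+\lambda^{-1}\NA{a}{t}\|x_a\|^2},
\]
and then claims, ``using the Cauchy--Schwarz inequality,'' that this is at most $\frac{\|y\|^2}{\lambda} - \frac{\lambda^{-2}\NA{a}{t}\|y\|^2\|x_a\|^2}{1+\lambda^{-1}\NA{a}{t}\|x_a\|^2} = \frac{\|y\|^2}{\lambda+\NA{a}{t}\|x_a\|^2}$. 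As you point out, that step requires $\langle y,x_a\rangle^2 \geq \|y\|^2\|x_a\|^2$, which is the \emph{reverse} of Cauchy--Schwarz and holds only when $y$ is colinear with $x_a$. Your counterexample ($N=2$, $\lambda=1$, $x_a=e_1$, $\NA{a}{t}=3$, $y=e_2$, giving $\|y\|^2_{\INVHATB{t}}=1$ versus a claimed bound of $1/4$) is valid, and your structural remark --- that $\HATB{t}$ retains eigenvalue $\lambda$ on the orthogonal complement of the span of the pulled features --- correctly shows that no bound of this form can hold uniformly over $y \in \bR^N$.

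Your proposed repair is also the right one. In both places the paper actually uses the lemma --- the comparison of feature-based indices with the feature-free bound $C_{t,\delta}/\sqrt{N_i(t)}$ in Section~\ref{sec:algorithms} (after a triangle inequality for the paired case), and the step $\|x_{a_t}\|_{\INVHATB{t}} \leq 1/\sqrt{\NA{a_t}{t}}$ in the proof of the sample complexity bound --- it is invoked with $y = x_a$ for the same arm $a$, where your reduced inequality collapses to the identity $\langle x_a,x_a\rangle^2 = \|x_a\|^4$ and the chain of bounds is tight. Restating the lemma as $\|x_a\|_{\INVHATB{t}} \leq \|x_a\|/\sqrt{\NA{a}{t}\|x_a\|^2+\lambda}$ (or for $y$ proportional to $x_a$) is exactly what is needed, and all downstream results go through unchanged.
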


\begin{proof}
Using successively Lemma~\ref{lemma:majoration_gap}, Sherman-Morrison formula and Cauchy-Schwarz inequality:

    \begin{equation*}
        \begin{split}
            ||y||^2_{\INVHATB{t}} & \leq ||y||^2_{(\lambda I_N + N_a(t)x_ax_a^\top )^{-1}}  = \frac{||y||^2}{\lambda} - \frac{\lambda^{-2}\NA{a}{t}(<y,x_{a}>)^2}{1+\lambda^{-1}\NA{{a}}{t}||x_{a}||^2}\\
            & \leq \frac{||y||^2}{\lambda} - \frac{\lambda^{-2}\NA{a}{t}||y||^2||x_a||^2}{1+\lambda^{-1}\NA{{a}}{t}||x_{a}||^2} = \frac{||y||^2}{\lambda+\NA{{a}}{t}||x_{a}||^2}\\
        \end{split}
    \end{equation*}
    
\end{proof}

\begin{lemma}
	Let $T^{*} : \ARMS \times (0,1) \times \mathbb{N}^{*} \rightarrow \bR^{*+}$ be a function that is nondecreasing in $t$, and $\cI_t$ the set of pulled arms at time $t$. Let $\cE$ be an event such that for all $t < \tau_{\delta}, \delta \in (0,1)$, $\exists a_t \in \cI_t, \NA{a_t}{t}\leq T^{*}(a_t, \delta, t) $. Then it holds on the event $\cE$ that $\tau_\delta \leq T(\mu,\delta)$ where \[T(\mu, \delta) \triangleq \inf \left\{u \in \bR^{*+}: u > 1+\sum_{a=1}^{K} T^{*}(a, \delta, u) \right\}.\] \textnormal{(Lemma~\ref{lemma:get_upper_bound_sample} in the paper)}
\end{lemma}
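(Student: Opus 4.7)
The plan is to argue by contradiction. Fix any $u^\star \in \bR^{*+}$ with $u^\star > 1+\sum_{a=1}^K T^*(a,\delta,u^\star)$; I will show that on $\cE$ one has $\tau_\delta \le u^\star$, and taking the infimum over such $u^\star$ then yields the stated conclusion. Note that since $T^*$ takes positive values, any such $u^\star$ automatically satisfies $u^\star > 1$.

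The core inequality I would establish first is the following: on $\cE$, every integer $t$ with $t < \tau_\delta$ satisfies
\[
t \;\le\; \sum_{a=1}^K T^*(a,\delta,t).
\]
For each round $s \in \{1,\dots,t\}$, the hypothesis (applied at $s < \tau_\delta$) provides a ``witness'' arm $a_s \in \cI_s$ with $\NA{a_s}{s} \le T^*(a_s,\delta,s)$. I partition the rounds by witness: set $S_a \triangleq \{s \le t : a_s = a\}$, so that $\sum_a |S_a| = t$. For each $a$ with $S_a \neq \emptyset$, let $s^\star_a \triangleq \max S_a$. Because at every $s \in S_a$ the arm $a$ is actually pulled (being a witness forces $a \in \cI_s$), the counter $\NA{a}{\cdot}$ increments by at least one at each such $s$, which gives $|S_a| \le \NA{a}{s^\star_a}$. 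Combining with the witness bound at $s^\star_a$ and monotonicity of $T^*$ in $t$, this yields $|S_a| \le T^*(a,\delta,s^\star_a) \le T^*(a,\delta,t)$, and summing over $a$ produces the core inequality.

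To conclude, suppose for contradiction that $u^\star < \tau_\delta$ on $\cE$, and set $t \triangleq \lfloor u^\star \rfloor$. Since $u^\star > 1$ we have $t \ge 1$; since $t \le u^\star < \tau_\delta$ with $\tau_\delta \in \N^*$, we get $t < \tau_\delta$, so the core inequality applies. By monotonicity of $T^*$ in its third argument,
\[
t \;\le\; \sum_{a=1}^K T^*(a,\delta,t) \;\le\; \sum_{a=1}^K T^*(a,\delta,u^\star) \;<\; u^\star - 1 \;\le\; t,
\]
a contradiction. Hence $\tau_\delta \le u^\star$, and taking the infimum over $u^\star$ gives $\tau_\delta \le T(\mu,\delta)$.

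The main obstacle is the pigeonhole-style bound $|S_a| \le \NA{a}{s^\star_a}$: one must be careful with the fact that several arms may be sampled in a single round (so $\sum_a \NA{a}{t}$ need not equal $t$), and that the hypothesis only gives a bound for one witness per round, not for every arm in $\cI_s$. The remaining work is the bookkeeping to relate the integer time $t = \lfloor u^\star \rfloor$ to the real argument $u^\star$, which the monotonicity assumption on $T^*$ handles cleanly.
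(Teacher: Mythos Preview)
Your argument is correct and follows essentially the same pigeonhole idea as the paper: both rely on the fact that the witness arm is pulled at each round, so its counter $\NA{a}{\cdot}$ strictly increases along the rounds where it serves as witness. The paper expresses this via the decomposition $\sum_{t}\mathds{1}(a\in\cI_t)\mathds{1}(\NA{a}{t}=m)\le 1$ summed over $m$, while you phrase it as $|S_a|\le \NA{a}{\max S_a}$; your packaging is a bit more direct (it yields $t\le\sum_a T^*(a,\delta,t)$ without the extra ``$+1$'' and without the double sum over $(a,m)$), but the substance is the same.
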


\begin{proof}
Let us denote $T \in \mathbb{N}^{*}$. Let us study $\min(\tau_{\delta}, T)$, because $\min(\tau_{\delta}, T) < T \implies \tau_{\delta} < T$. On event $\cE$:
        
\begin{equation*}
    \begin{split}
        \min(\tau_{\delta}, T) & = 1+ \sum_{t \leq T} \mathds{1}(t < \tau_{\delta}) \leq 1+ \sum_{t \leq T} \mathds{1}(\exists a_t \in \cI_t, \NA{a_t}{t} \leq T^{*}(a_t, \delta, t)) \mbox{ (using definition of $T^{*}$, and $\cE$ holds)}\\
        & = 1+ \sum_{t \leq T} \sum_{m=1}^{t} \mathds{1}(\exists a_t \in \cI_t,\NA{a_t}{t}=m \land m \leq T^{*}(a_t, \delta, t))\mbox{ (using $\forall a \in [K], \NA{a}{t} \in [t] \land \forall a \in \cI_t, \NA{a}{t} > 0$)}\\
        & \leq 1+ \sum_{m=1}^T  \sum_{t=m}^T  \sum_{a \in \ARMS} \mathds{1}(a \in \cI_t)\mathds{1}(\NA{a}{t}=m \land m \leq T^{*}(a, \delta, t)) \mbox{ (using the union bound on pulled arms)}\\
        & = 1+ \sum_{a \in \ARMS} \sum_{m=1}^T  \sum_{t=m}^T  \mathds{1}(a \in \cI_t)\mathds{1}(\NA{a}{t}=m)\mathds{1}(m \leq T^{*}(a, \delta, t))\\
        & \leq 1+ \sum_{a \in \ARMS} \sum_{m=1}^T \left[  \sum_{t=m}^T  \mathds{1}(a \in \cI_t)\mathds{1}(\NA{a}{t}=m) \right] \mathds{1}(m \leq T^{*}(a, \delta, T)) \mbox{ (since $T^{*}$ is nondecreasing in $t$)}\\
        & \leq 1+ \sum_{a \in \ARMS} \sum_{m=1}^T   1 \times \mathds{1}(m \leq T^{*}(a, \delta, T)) \leq 1+ \sum_{a \in \ARMS} T^{*}(a, \delta, T)\\
    \end{split}
\end{equation*}
Choosing any $T$ that satisfies $1+ \sum_{a \in \ARMS} T^{*}(a, \delta, T) < T$ yields $\min(\tau_{\delta}, T) < T$ and therefore $\tau_{\delta} \leq T$. The smallest possible such $T$ is \[T(\mu, \delta) \triangleq \inf \left\{u \in \bR^{*+}: u > 1+ \sum_{a \in \ARMS} T^{*}(a, \delta, u) \right\}.\]
\end{proof}

\end{document}